\title{Robustness of classifiers: from adversarial to random noise}
\author{
Alhussein Fawzi\thanks{The first two authors contributed equally to this work.}\\
  \'Ecole Polytechnique F\'ed\'erale de Lausanne \\
  Lausanne, Switzerland \\
  \texttt{alhussein.fawzi at epfl.ch} \\  
  \And
  Seyed-Mohsen Moosavi-Dezfooli\footnotemark[1]
    %information about author (webpage, alternative
    %address)---\emph{not} for acknowledging funding agencies.}
    \\
  \'Ecole Polytechnique F\'ed\'erale de Lausanne\\
  Lausanne, Switzerland\\
  \texttt{seyed.moosavi at epfl.ch} \\
  %% examples of more authors
  \AND
  Pascal Frossard \\
   \'Ecole Polytechnique F\'ed\'erale de Lausanne\\
 Lausanne, Switzerland\\
  \texttt{pascal.frossard at epfl.ch} \\
  %% \And
  %% Coauthor \\
  %% Affiliation \\
  %% Address \\
  %% \texttt{email} \\
  %% \And
  %% Coauthor \\
  %% Affiliation \\
  %% Address \\
  %% \texttt{email} \\
}
\newlength\myindent
\newcommand{\argmax}{\operatornamewithlimits{argmax}}
\newcommand{\argmin}{\operatornamewithlimits{argmin}}
\newtheorem{lemma}{Lemma}
\newtheorem{theorem}{Theorem}
\newtheorem{corollary}{Corollary}
\newcommand{\bb}[1]{\mathbb{#1}}
\newcommand{\bo}{\mathscr{B}}
\newcommand{\gaone}{\zeta_1(m, \delta)}
\newcommand{\gatwo}{\zeta_2(m, \delta)}
\def\x{\boldsymbol{x}}
\def\y{\boldsymbol{y}}
\def\z{\boldsymbol{z}}
\def\r{\boldsymbol{r}}
\def\w{\boldsymbol{w}}
\def\v{\boldsymbol{v}}
\def\p{\boldsymbol{p}}
\newcommand{\lab}{\hat{k}}
\newcommand{\numClass}{L}
\def\u{\boldsymbol{u}}
\def\S{\mathcal{S}}
\def\T{\mathcal{T}}
\def\R{\mathbb{R}}
\def\P{\mathbf{P}}
\def\Pbb{\mathbb{P}}
\def\B{\mathscr{B}}
\newtheorem*{rep@theorem}{\rep@title}
\newcommand{\newreptheorem}[2]{%
\newenvironment{rep#1}[1]{%
 \def\rep@title{#2 \ref{##1}}%
 \begin{rep@theorem}}%
 {\end{rep@theorem}}}
\DeclarePairedDelimiterX{\infdivx}[2]{}{}{%
  #1\;\delimsize\|\;#2%
}
\begin{document}

\maketitle

\begin{abstract}

Several recent works have shown that state-of-the-art classifiers are vulnerable to worst-case (i.e., adversarial) perturbations of the datapoints. On the other hand, it has been empirically observed that these same classifiers are relatively robust to random noise. In this paper, we propose to study a \textit{semi-random} noise regime that generalizes both the random and worst-case noise regimes. We propose the first quantitative analysis of the robustness of nonlinear classifiers in this general noise regime. We establish precise theoretical bounds on the robustness of classifiers in this general regime, which depend on the curvature of the classifier's decision boundary. Our bounds confirm and quantify the empirical observations that classifiers satisfying curvature constraints are robust to random noise. Moreover, we quantify the robustness of classifiers in terms of the subspace dimension in the semi-random noise regime, and show that our bounds remarkably interpolate between the worst-case and random noise regimes. We perform experiments and show that the derived bounds provide very accurate estimates when applied to various state-of-the-art deep neural networks and datasets. This result suggests bounds on the curvature of the classifiers' decision boundaries that we support experimentally, and more generally offers important insights onto the geometry of high dimensional classification problems.
\end{abstract}

% curvature of decision boundaries are sufficiently small

%!TEX root=nips_2016.tex
\section{Introduction}

State-of-the-art classifiers, especially deep networks, have shown impressive classification performance on many challenging benchmarks in visual tasks \cite{cv2} and speech processing \cite{sp2}. An equally important property of a classifier that is often overlooked is its \textit{robustness} in noisy regimes, when data samples are perturbed by noise. The robustness of a classifier is especially fundamental when it is deployed in real-world, uncontrolled, and possibly hostile environments. In these cases, it is crucial that classifiers exhibit good robustness properties. In other words, a sufficiently small perturbation of a datapoint should ideally not result in altering the estimated label of a classifier. State-of-the-art deep neural networks have recently been shown to be very unstable to worst-case perturbations of the data (or equivalently, \textit{adversarial} perturbations)~\citep{szegedy2013intriguing}. In particular, despite the excellent classification performances of these classifiers, well-sought perturbations of the data can easily cause misclassification, since data points often lie very close to the decision boundary of the classifier. Despite the importance of this result, the \textit{worst-case} noise regime that is studied in~\citep{szegedy2013intriguing} only represents a very specific type of noise. It furthermore requires the full knowledge of the classification model, which may be a hard assumption in practice. 
% Moreover, it is not clear whether the instability of these classifiers to worst-case perturbations implies instability in other noise regimes. % whether these classifiers are also unstable in other more realistic noise regimes.
% In particular, it does not tell us anything on the robustness to random or semi-random noise of the classifier. We fill this gap in this paper and study....
% The goal of this paper is to quantify

% In this paper, we perform a theoretical study of the robustness of general nonlinear classifiers in two key practical noise regimes. I

In this paper, we precisely quantify the robustness of nonlinear classifiers in two practical noise regimes, namely random and semi-random noise regimes. % perform a theoretical study of the robustness of general nonlinear classifiers in two key practical noise regimes.
In the \textit{random noise} regime, datapoints are perturbed by noise with random direction in the input space. The \textit{semi-random} regime generalizes this model to random \textit{subspaces} of arbitrary dimension, where a worst-case perturbation is sought within the subspace. In both cases, we derive bounds that precisely describe the robustness of classifiers in function of the \textit{curvature} of the decision boundary. % quantify the relations between the different noise regimes
% the robustness of the classifier that precisely describe the behaviour of a classifier to noisy samples in function of the \textit{curvature} of the decision boundary, and quantify the relations between the different noise regimes. 
We summarize our contributions as follows:
\begin{itemize}
\item In the random regime, we show that the robustness of classifiers behaves as $\sqrt{d}$ times the distance from the datapoint to the classification boundary (where $d$ denotes the dimension of the data) provided the curvature of the decision boundary is sufficiently small. This result highlights the blessing of dimensionality for classification tasks, as it implies that robustness to random noise in high dimensional classification problems can be achieved, even at datapoints that are very close to the decision boundary.
\item This quantification notably extends to the general semi-random regime, where we show that the robustness precisely behaves as $\sqrt{\nicefrac{d}{m}}$ times the distance to boundary, with $m$ the dimension of the subspace. This result shows in particular that, even when $m$ is chosen as a small fraction of the dimension $d$, it is still possible to find \textit{small} perturbations that cause data misclassification. 
\item We empirically show that our theoretical estimates are very accurately satisfied by state-of-the-art deep neural networks on various sets of data. This in turn suggests quantitative insights on the curvature of the decision boundary that we support experimentally through the visualization and estimation on two-dimensional sections of the boundary. % This offers quantitative insights onto the % of the curvature on two-dimensional sections of the decision boundary. In particular, our experiments seem to suggest that  % This suggests properties on the curvature of the decision boundary, that we support experimentally.
% Such experiments properties on the curvature of the decision boundary, that we verify experimentally.
% We perform experiments on state-of-the art classifiers and different datasets, and show that our theoretical estimates are very accurately OUR ESTIMATES ARE VERY accurately satisfies by on various sets of data
% We show that the derived bounds we established for sufficiently small curvature are precisely met by state-of-the-art classifiers, which suggests properties on the curvature of the decision boundary, that we verify experimentally.
\end{itemize}

The robustness of classifiers to noise has been the subject of intense research. The robustness properties of SVM classifiers have been studied in \cite{xu2009robustness} for example, and robust optimization approaches for constructing robust classifiers have been proposed to minimize the worst possible empirical error under noise disturbance \cite{sra2012optimization, lanckriet2003robust}. More recently, following the recent results on the instability of deep neural networks to worst-case perturbations \cite{szegedy2013intriguing}, several works have provided explanations of the phenomenon \cite{fawzi2015a,goodfellow2014, sabour2016adversarial,tabacof2015exploring}, and designed more robust networks \citep{gu2014towards, huang2015learning, zhao2016suppressing, moosavi2015deepfool, shaham2015understanding,luo2015foveation}. In \citep{tabacof2015exploring}, the authors provide an interesting empirical analysis of the adversarial instability, and show that adversarial examples are not isolated points, but rather occupy dense regions of the pixel space. In \cite{bmvc2015_106}, state-of-the-art classifiers are shown to be vulnerable to geometrically constrained adversarial examples. Our work differs from these works, as we provide a theoretical study of the robustness of classifiers to random and semi-random noise in terms of the robustness to adversarial noise. In \cite{fawzi2015a}, a formal relation between the robustness to random noise, and the worst-case robustness is established in the case of linear classifiers. Our result therefore generalizes \cite{fawzi2015a} in many aspects, as we study general nonlinear classifiers, and robustness to semi-random noise. Finally, it should be noted that the authors in \cite{goodfellow2014} conjecture that the ``high linearity'' of classification models explains their instability to adversarial perturbations. The objective and approach we follow here is however different, as we study theoretical relations between the robustness to random, semi-random and adversarial noise. 

\section{Definitions and notations}

\label{sec:definition_semirandom}
Let $f:\mathbb{R}^d\rightarrow\mathbb{R}^L$ be an $\numClass$-class classifier. Given a datapoint $\x_0 \in \mathbb{R}^d$, the estimated label is obtained by $\lab(\x_0)=\argmax_k f_k(\x_0)$, where $f_k(\x)$ is the $k^{\text{th}}$ component of $f(\x)$ that corresponds to the $k^{\text{th}}$ class.
Let $\S$ be an arbitrary subspace of $\mathbb{R}^d$ of dimension $m$. Here, we are interested in quantifying the robustness of $f$ with respect to different noise regimes. To do so, we define $\r_{\S}^*$ to be the perturbation in $\S$ of minimal norm that is required to change the estimated label of $f$ at $\x_0$.\footnote{Perturbation vectors sending a datapoint exactly to the boundary are assumed to change the estimated label of the classifier.}
% \footnote{For simplicity, we will assume that the ground truth label of $\x_0$ is equal to the estimated label, i.e., $\hat{k} (\x_0)$.} d
\begin{equation}
\begin{split}
\r_{\S}^*(\x_0)=&\argmin_{\r \in \S} \|\r\|_2 \text{ s.t. } \hat{k} (\x_0+\r) \neq \hat{k} (\x_0).
\end{split}\label{eq:def_subspace_optimal}
\end{equation}
%\begin{equation}
%\begin{split}
%\r_{\S}^*=&\argmin_{\r \in \S} \|\r\|_2\\
%&\text{s.t. } \exists k\neq\hat{k}(\x_0): f_k(\x_0+\r)\geq f_{\lab}(\x_0+\r).
%\end{split}\label{eq:def_subspace_optimal}
%\end{equation}
Note that $\r_{\S}^*(\x_0)$ can be equivalently written 
\begin{align}
\label{eq:second_form_robustness}
\r_{\S}^*(\x_0) = \argmin_{\r \in \S} \|\r\|_2 \text{ s.t. } \exists k\neq\hat{k}(\x_0): f_k(\x_0+\r)\geq f_{\lab(\x_0)}(\x_0+\r).
\end{align}
When $\S = \mathbb{R}^d, \r^*(\x_0) := \r_{\mathbb{R}^d}^*(\x_0)$ is the \textit{adversarial (or worst-case) perturbation}  defined in \cite{szegedy2013intriguing}, which corresponds to the (unconstrained) perturbation of minimal norm that changes the label of the datapoint $\x_0$. In other words, $\| \r^*(\x_0) \|_2$ corresponds to the minimal distance from $\x_0$ to the classifier boundary. % $\r^*$ is also called an \textit{adversarial} perturbation~\cite{szegedy2013}, as it corresponds to the minimal noise that an adversary having knowledge of the classifier model needs to apply in order to misclassify the datapoint. 
In the case where $\S \subset \mathbb{R}^d$, only perturbations along $\S$ are allowed. The robustness of $f$ at $\x_0$ along $\S$ is naturally measured by the norm $\| \r_{\S}^*(\x_0) \|_2$. Different choices for $\S$ permit to study the robustness of $f$ in two different regimes:

\begin{itemize}% [noitemsep, nolistsep]
\item \textbf{Random noise regime}: This corresponds to the case where $\S$ is a \textit{one-dimensional subspace} ($m=1$) with direction $\boldsymbol{v}$, where $\boldsymbol{v}$ is a \textit{random vector} sampled uniformly from the unit sphere $\mathbb{S}^{d-1}$. Writing it explicitly, we study in this regime the robustness quantity defined by $\min_{t} |t| \text{ s.t. } \exists k \neq \lab(\x_0), f_k(\x_0+t \boldsymbol{v}) \geq f_{\lab(\x_0)} (\x_0 + t \boldsymbol{v})$, where $\boldsymbol{v}$ is a vector sampled uniformly at random from the unit sphere $\mathbb{S}^{d-1}$.
\item \textbf{Semi-random noise regime}: In this case, the subspace $\S$ is chosen \textit{randomly}, but can be of arbitrary dimension $m$.\footnote{A random subspace is defined as the span of $m$ independent vectors drawn uniformly at random from $\mathbb{S}^{d-1}$.} We use the \textit{semi}-random terminology as the subspace is chosen randomly, and the smallest vector that causes misclassification is then sought in the subspace.
% and seek a vector in the subspace that causes misclassification.
% but a worst-case direction in that subspace is chosen.
It should be noted that the random noise regime is a special case of the semi-random regime with a subspace of dimension $m = 1$. We differentiate nevertheless between these two regimes for clarity. 
%as the perturbation direction is chosen in a worst-case fashion for subspaces of dimension larger than $1$, whereas the random noise regime does not involve any choice in the direction.}
% In other words, we study here the magnitude of random noise one can apply to a datapoint without causing data misclassification.
\end{itemize}

In the remainder of the paper, the goal is to establish relations between the robustness in the random and semi-random regimes on the one hand, and the robustness to adversarial perturbations $\|\r^*(\x_0)\|_2$ on the other hand. We recall that the latter quantity captures the distance from $\x_0$ to the classifier boundary, and is therefore a key quantity in the analysis of robustness.

In the following analysis, we fix $\x_0$ to be a datapoint classified as $\lab(\x_0)$. To simplify the notation, we remove the explicit dependence on $\x_0$ in our notations (e.g., we use $\r^*_\S$ instead of $\r^*_\S (\x_0)$ and $\lab$ instead of $\lab(\x_0)$), and it should be implicitly understood that all our quantities pertain to the fixed datapoint $\x_0$.

\section{Robustness of affine classifiers}
\label{sec:robustness_affine_semirandom}

We first assume that $f$ is an affine classifier, i.e., $f(\x)=\mathbf{W}^\top\x+\boldsymbol{b}$ for a given $\mathbf{W} = [\w_1 \dots \w_\numClass]$ and $\boldsymbol{b} \in \mathbb{R}^L$.

% \red{maybe say somewhere that you remove the dependence on datapoint $\x_0$ and classification function, to make it more readable}

% \subsection{Notations and definitions}
% We first introduce several notations used in our analysis. Let  We define

% \subsection{Affine classifier}

% \subsection{Main result}
The following result shows a precise relation between the robustness to semi-random noise, $\|\r^*_\S\|_2 $ and the robustness to adversarial perturbations, $\|\r^*\|_2 $.% in the random and semi-random noise regimes.

\begin{theorem}
\label{th:main_result_linear}
Let $\delta>0$ and $\S$ be a random $m$-dimensional subspace of $\mathbb{R}^d$, and $f$ be a $L$-class affine classifier. Let
\begin{align}
\label{eq:chap_semirandom_gaone}
\gaone & = \left(1 + 2 \sqrt{\frac{ \ln(1/\delta)}{m}} + \frac{2 \ln(1/\delta)}{m} \right)^{-1}, \\
\label{eq:chap_semirandom_gatwo}
\gatwo & = \left( \max \left( (1/e) \delta^{2/m}, 1-\sqrt{2(1-\delta^{2/m})} \right) \right)^{-1}.
\end{align}
The following inequalities hold between the robustness to semi-random noise $\|\r^*_\S\|_2 $, and the robustness to adversarial perturbations $\|\r^*\|_2 $:
% \|\r_\S^*(\x_0) \|_2^2
\begin{equation}
\sqrt{\gaone}  \sqrt{\frac{d}{m}} \|\r^*\|_2  \leq
\|\r^*_\S\|_2  \leq \sqrt{\gatwo} \sqrt{\frac{d}{m}} \|\r^*\|_2 ,
\end{equation}
with probability exceeding $1 - 2 (L+1) \delta$.
\label{th:linear}
\end{theorem}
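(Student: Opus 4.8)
The plan is to reduce the statement to a sharp two-sided concentration estimate for the squared norm of the projection of a fixed vector onto a random subspace, and then transfer that estimate to the robustness quantities through their closed forms.

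First I would write both robustness quantities explicitly. Since $f$ is affine, the label changes as soon as $g_k(\x):=f_k(\x)-f_{\lab}(\x)=(\w_k-\w_{\lab})^\top\x+(b_k-b_{\lab})$ becomes nonnegative for some $k\neq\lab$, an affine function with gradient $\w_k-\w_{\lab}$. Minimizing $\|\r\|_2$ over a union of halfspaces equals the minimum of the individual distances, so
\begin{equation*}
\|\r^*\|_2 = \min_{k\ne\lab} \frac{f_{\lab}(\x_0)-f_k(\x_0)}{\|\w_k-\w_{\lab}\|_2}, \qquad \|\r^*_\S\|_2 = \min_{k\ne\lab} \frac{f_{\lab}(\x_0)-f_k(\x_0)}{\|\P_\S(\w_k-\w_{\lab})\|_2},
\end{equation*}
where $\P_\S$ denotes orthogonal projection onto $\S$ (the constrained minimizer lies along $\P_\S(\w_k-\w_{\lab})$). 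Writing $\u_k=(\w_k-\w_{\lab})/\|\w_k-\w_{\lab}\|_2$ and $\rho_k=\|\P_\S\u_k\|_2^2$, the two expressions differ only through the factor $\rho_k$, so the whole problem reduces to controlling these quantities.

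Next I would identify the law of $\rho_k$. For a fixed unit vector $\u$ and a uniformly random $m$-dimensional subspace $\S$, rotational invariance gives $\rho_k\stackrel{d}{=}\big(\sum_{i=1}^m Z_i^2\big)/\big(\sum_{i=1}^d Z_i^2\big)$ with $Z_i$ i.i.d. standard normal, a $\mathrm{Beta}(m/2,(d-m)/2)$ variable with mean $m/d$. The two constants are exactly high-probability tail thresholds for this variable, lying respectively above and below the mean: I would prove $\Pbb[\rho_k\ge \tfrac{m}{d}\gaone^{-1}]\le\delta$ (an upper-tail bound, obtained from the Laurent–Massart deviation inequality for $\chi^2_m$, which yields precisely $\gaone^{-1}=1+2\sqrt{\ln(1/\delta)/m}+2\ln(1/\delta)/m$) and $\Pbb[\rho_k\le \tfrac{m}{d}\gatwo^{-1}]\le\delta$ (a lower-tail bound on the Beta variable). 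Given these, the inequalities follow elementarily. For the upper bound on $\|\r^*_\S\|_2$, let $k^\star$ attain $\min_k d_k$ with $d_k=(f_{\lab}(\x_0)-f_k(\x_0))/\|\w_k-\w_{\lab}\|_2$, so $d_{k^\star}=\|\r^*\|_2$; on the event $\rho_{k^\star}\ge\tfrac{m}{d}\gatwo^{-1}$ one gets $\|\r^*_\S\|_2\le d_{k^\star}/\sqrt{\rho_{k^\star}}\le\sqrt{\gatwo}\sqrt{d/m}\,\|\r^*\|_2$. For the lower bound, using $d_k\ge\|\r^*\|_2$ for all $k$ together with $\rho_k\le\tfrac{m}{d}\gaone^{-1}$ gives $\|\r^*_\S\|_2=\min_k d_k/\sqrt{\rho_k}\ge\|\r^*\|_2/\sqrt{\max_k\rho_k}\ge\sqrt{\gaone}\sqrt{d/m}\,\|\r^*\|_2$. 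Finally I would apply a union bound over the relevant tail events — the upper-tail bound for all $k\ne\lab$ and the lower-tail bound for $k^\star$ — to obtain the stated probability $1-2(L+1)\delta$, the factor essentially counting both tail estimates across the $L$ classes.

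The main obstacle is the sharp lower-tail estimate encoded in $\gatwo$. Unlike the upper tail, a purely sub-Gaussian (Laurent–Massart) bound on $\chi^2_m$ does not capture how small $\rho_k$ can become when a random subspace is nearly orthogonal to $\u_k$; this regime is governed by the $t^{m/2-1}$ behaviour of the Beta density near the origin and produces the $\delta^{2/m}$ scaling. The two branches inside the $\max$ in $\gatwo$ correspond precisely to these regimes — the Gaussian-type deviation $1-\sqrt{2(1-\delta^{2/m})}$ for moderate $m$ and the genuine small-ball estimate $(1/e)\delta^{2/m}$ for the extreme tail — and one keeps whichever gives the stronger lower bound on $\rho_k$. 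A secondary technical point is that the same random $\S$ is shared by all difference vectors $\w_k-\w_{\lab}$, so these events are dependent; this is harmless for the union bound but is the reason the class count $L$ enters the failure probability.
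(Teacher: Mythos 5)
Your proposal follows essentially the same route as the paper's proof: reduce to the closed-form affine perturbations, observe that the ratio of the two robustnesses is governed by $\|\mathbf{P}_{\mathcal{S}}\boldsymbol{u}_k\|_2^2$ --- a $\mathrm{Beta}(m/2,(d-m)/2)$ variable --- apply the two-sided tail thresholds defining $\zeta_1,\zeta_2$, and finish with a union bound over the $L$ classes (your event bookkeeping, using the lower tail only for the minimizing class, even yields a slightly better failure probability than the stated $2(L+1)\delta$). The one caveat is that the upper-tail threshold must be proved for the ratio $\chi^2_m/\chi^2_d$ itself rather than for the $\chi^2_m$ numerator alone, as a literal Laurent--Massart application would give; the Dasgupta--Gupta projection lemma the paper invokes supplies exactly that bound with the same constants.
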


\begin{figure}[ht]
\center
\includegraphics[scale=0.4]{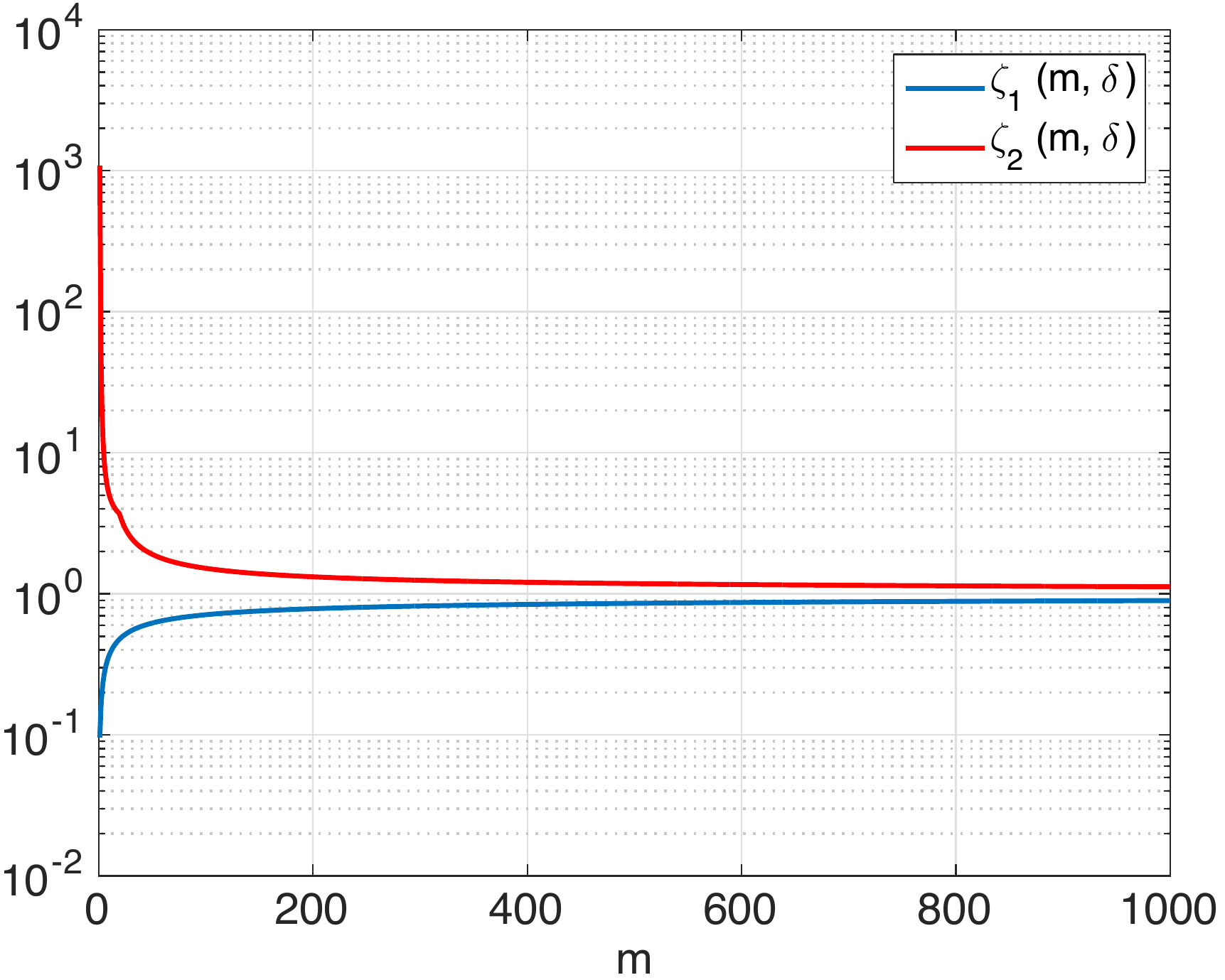}
\caption{$\gaone$ and $\gatwo$ in function of $m$ [$\delta = 0.05$] .}
\label{fig:zetas}
\end{figure}

The proof can be found in the appendix.
Our upper and lower bounds depend on the functions $\gaone$ and $\gatwo$ that control the inequality constants (for $m$, $\delta$ fixed). It should be noted that $\gaone$ and $\gatwo$ are independent of the data dimension $d$. Fig.~\ref{fig:zetas} shows the plots of $\gaone$ and $\gatwo$ as functions of $m$, for a fixed $\delta$. It should be noted that for sufficiently large $m$, $\gaone$ and $\gatwo$ are very close to $1$ (e.g., $\gaone$ and $\gatwo$ belong to the interval $[0.8, 1.3]$ for $m \geq 250$ in the settings of Fig.~\ref{fig:zetas}). The interval $[\gaone, \gatwo]$ is however (unavoidably) larger when $m=1$.

The result in Theorem \ref{th:linear} shows that in the random and semi-random noise regimes, the robustness to noise is precisely related to $\|\r^*\|_2$ by a factor of $\sqrt{\nicefrac{d}{m}}$. Specifically, in the random noise regime ($m = 1$), the magnitude of the noise required to misclassify the datapoint behaves as $\Theta( \sqrt{d} \|\r^*\|_2 )$ with high probability, with constants in the interval $[\zeta_1(1, \delta), \zeta_2(1, \delta)]$. Our results therefore show that, in high dimensional classification settings, affine classifiers can be robust to random noise, even if the datapoint lies very closely to the decision boundary (i.e., $\|\r^*\|_2$ is small). % \red{Refer to Szegedy later.}% This theoretically explains the results reported previously in \citep{szegedy2013intriguing}, on the large robustness of classifiers to random noise, despite having a very small robustness to adversarial perturbation . 
In the semi-random noise regime with $m$ sufficiently large (e.g., $m \geq 250$), we have $\|\r^*_\S\|_2 \approx \sqrt{\nicefrac{d}{m}} \|\r^*\|_2$ with high probability, as the constants $\zeta_1(m, \delta) \approx \zeta_2(m, \delta) \approx 1$ for sufficiently large $m$. % \red{figure needed}
Our bounds therefore ``interpolate'' between the random noise regime, which behaves as $\sqrt{d} \|\r^*\|_2$, and the worst-case noise $\|\r^*\|_2$. More importantly, the square root dependence is also notable here, as it shows that the semi-random robustness can remain small even in regimes where $m$ is chosen to be a very small fraction of $d$. For example, choosing a small subspace of dimension $m = 0.01 d$ results in semi-random robustness of $10 \|\r^*\|_2$ with high probability, which might still not be perceptible in complex visual tasks. Hence, for semi-random noise that is mostly random and only mildly adversarial (i.e., the subspace dimension is small), affine classifiers remain vulnerable to such noise.  %  (see experimental section for visual results).

%%%%%%%%%%%%%%%%%%%%%% BEGIN PROOF %%%%%%%%%%%%%%%%%%%%%%%

%%%%%%%%%%%%%%%%%%%%%%%%%%%% END PROOF %%%%%%%%%%%%%%%%%

\section{Robustness of general classifiers}
\label{sec:robustness_general_classifiers_semirandom}

% \red{Make sure in the main text to say that we remove the dependence to $\x_0$ for simplicity of the notation, and say that we work for a fixed $\x_0$, hence the class is also determined by the class of the datapoint, and we write $\bo_k$ to denote actually $\bo_{lab, k}$.}

\subsection{Decision boundary curvature}

We now consider the general case where $f$ is a nonlinear classifier. We derive relations between the random and semi-random robustness $\|\r^*_\S\|_2 $ and worst-case robustness $\|\r^*\|_2 $ using properties of the classifier's \textit{boundary}. Let $i$ and $j$ be two arbitrary classes; we define the pairwise boundary $\B_{i,j}$ as the boundary of the \textit{binary} classifier where only classes $i$ and $j$ are considered. Formally, the decision boundary $\B_{i,j}$ reads as follows:
\begin{align*}
\B_{i,j} = \{ \x \in \R^d: f_{i} (\x) - f_{j} (\x) = 0  \}.
\end{align*}
The boundary $\B_{i,j}$ separates between two regions of $\R^d$, namely $\mathcal{R}_i$ and $\mathcal{R}_j$, where the estimated label of the binary classifier is respectively $i$ and $j$. Specifically, we have % \red{should be strict here actually because there might be a point with $f_i - f_j = 0$ inside the ball, but then the next definition cannot hold as $\p \neq \mathcal{R}_i$.}
\begin{align*}
\mathcal{R}_i & = \{ \x \in \R^d: f_{i} (\x) > f_{j} (\x) \}, \\
\mathcal{R}_j & = \{ \x \in \R^d: f_{j} (\x) > f_{i} (\x) \}.
\end{align*}

\begin{figure}[ht]
\centering
\includegraphics[scale=0.4]{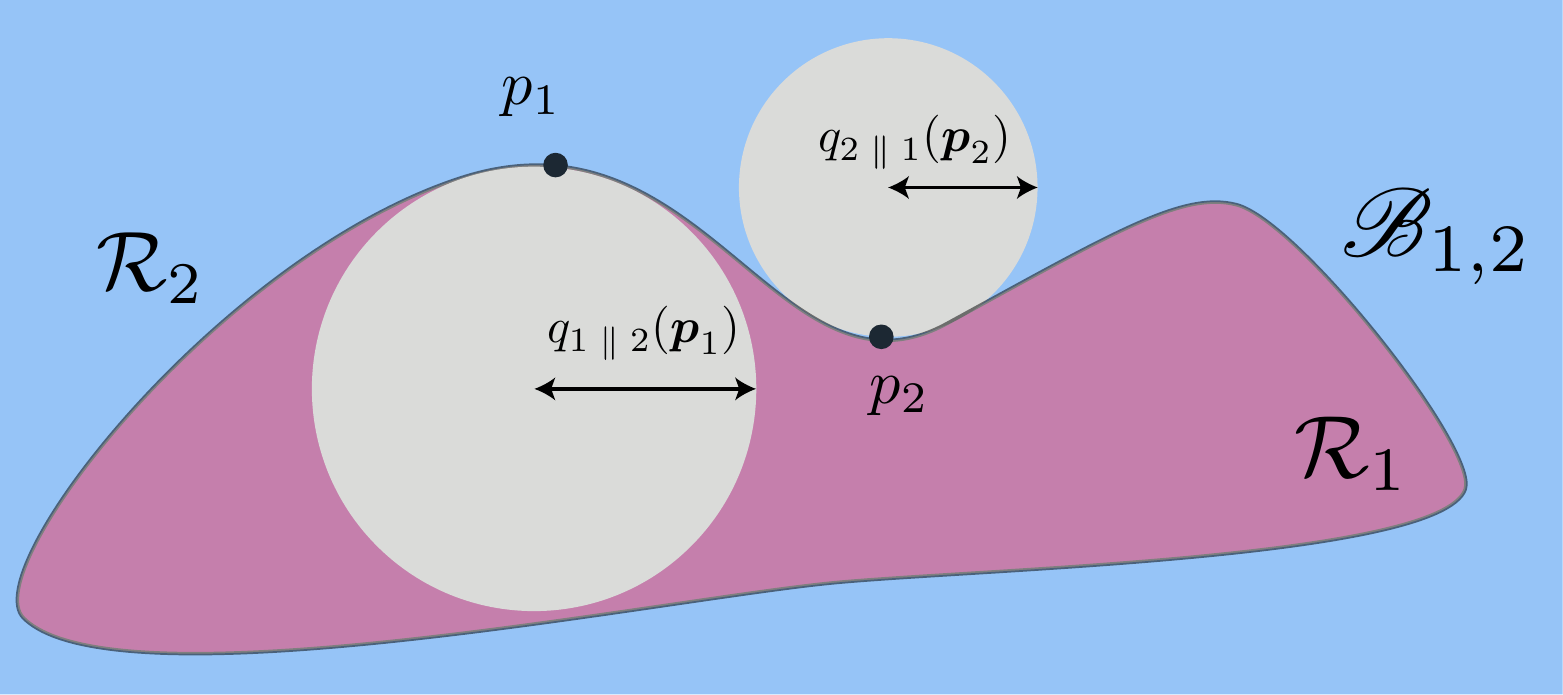}
\caption{\label{fig:radius_schema} Illustration of the quantities introduced for the definition of the curvature of the decision boundary.}
\end{figure}

We assume for the purpose of this analysis that the boundary $\bo_{i,j}$ is smooth. We are now interested in the geometric properties of the boundary, namely its curvature. There are many notions of curvature that one can define on hypersurfaces \cite{lee2009manifolds}. In the simple case of a curve in a two-dimensional space, the curvature is defined as the inverse of the radius of the so-called oscullating circle. One way to define curvature for high-dimensional hypersurfaces is by taking \textit{normal} sections of the hypersurface, and looking at the curvature of the resulting planar curve (see Fig. \ref{fig:subadv_nonlin}). We however introduce a notion of curvature that is specifically suited to the analysis of the decision boundary of a classifier. Informally, our curvature captures the \textit{global} bending of the decision boundary by inscribing balls in the regions separated by the decision boundary. % Thus, we define the \textit{curvature} of the decision boundary, denoted $\kappa\left(\B_{i,j}\right)$, which measures how $\B_{i,j}$ bends in different directions in the space $\R^d$. 

We now formally define this notion of curvature. For a given $\p \in \B_{i,j}$, we define $q_{\infdivx{i}{j}} (\p)$ to be the radius of the largest open ball included in the region $\mathcal{R}_i$ that intersects with $\B_{i,j}$ at $\p$; i.e., % \red{we should probably say open ball here...}
\begin{align}
\label{eq:radius_ball}
q_{\infdivx{i}{j}} (\p) = \sup_{\z \in \R^d} \left\{ \| \z - \p \|_2: \mathbb{B} (\z, \| \z - \p \|_2) \subseteq \mathcal{R}_i \right\},
\end{align}
where $\mathbb{B} (\z, \| \z - \p \|_2)$ is the open ball in $\bb{R}^d$ of center $\z$ and radius $\| \z - \p \|_2$. An illustration of this quantity in two dimensions is provided in Fig. \ref{fig:radius_schema}. It is not hard to see that any ball $\mathbb{B} (\z^*, \| \z^* - \p \|_2)$ centered in $\z^*$ and included in $\mathcal{R}_i$ will have its tangent space at $\p$ \textit{coincide} with the tangent of the decision boundary at the same point. %This is shown as follows:

It should further be noted that the definition in Eq. (\ref{eq:radius_ball}) is not symmetric in $i$ and $j$; i.e., $q_{\infdivx{i}{j}} (\p) \neq q_{\infdivx{j}{i}} (\p)$ as the radius of the largest ball one can inscribe in both regions need not be equal. We therefore define the following symmetric quantity $q_{i,j} (\p)$, where the worst-case ball inscribed in any of the two regions is considered:
\begin{align*}
q_{i,j} (\p) = \min \left( q_{\infdivx{i}{j}} (\p), q_{\infdivx{j}{i}} (\p) \right).
\end{align*}
% describes the curvature of the decision boundary locally at $p$ by packing the largest ball included in one of the regions.
This definition describes the curvature of the decision boundary locally at $\p$ by fitting the largest ball included in one of the regions. To measure the global curvature, the worst-case radius is taken over all points on the decision boundary, i.e.,
\begin{align}
q (\B_{i,j} ) & = \inf_{\p \in \B_{i,j}} q_{i,j} (\p), \label{eq:def_RBij} \\
\kappa (\B_{i,j}) & = \frac{1}{q (\B_{i,j} )}. \label{eq:def_kappa}
\end{align}
The curvature $\kappa (\B_{i,j})$ is simply defined as the inverse of the worst-case radius over all points $\p$ on the decision boundary. 

\begin{figure}[ht]
\centering
\includegraphics[scale=0.4]{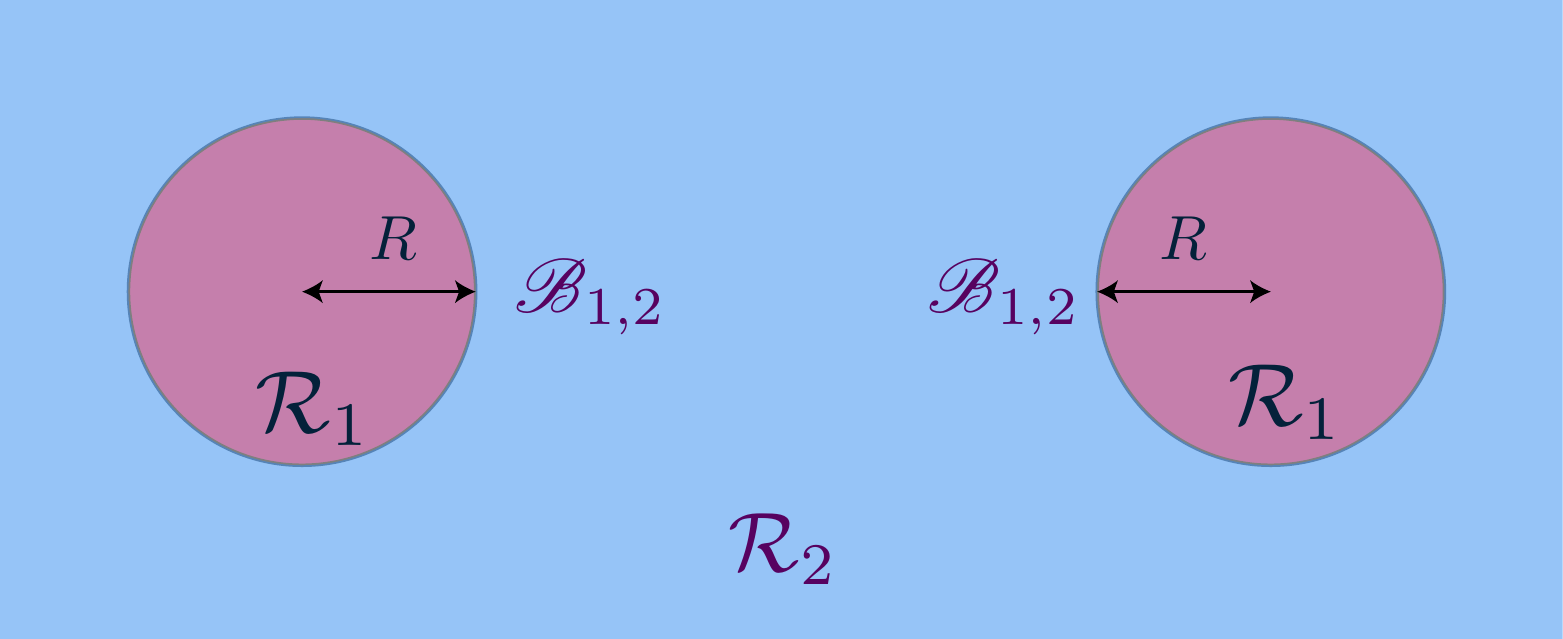}
\caption{\label{fig:sphere_example} Binary classification example where the boundary is a union of two sufficiently distant spheres. In this case, the curvature is $\kappa(\bo_{i,j}) = \nicefrac{1}{R}$, where $R$ is the radius of the circles.}
\end{figure}

In the case of affine classifiers, we have $\kappa (\B_{i,j}) = 0$, as it is possible to inscribe balls of infinite radius inside each region of the space. When the classification boundary is a union of (sufficiently distant) spheres with equal radius $R$ (see Fig. \ref{fig:sphere_example}), the curvature $\kappa (\B_{i,j}) = \nicefrac{1}{R}$. In general, the quantity $\kappa(\B_{i,j})$ provides an intuitive way of describing the nonlinearity of the decision boundary by fitting balls inside the classification regions.

\begin{figure}
\centering
% \subfigure[]{
\includegraphics[scale=0.8]{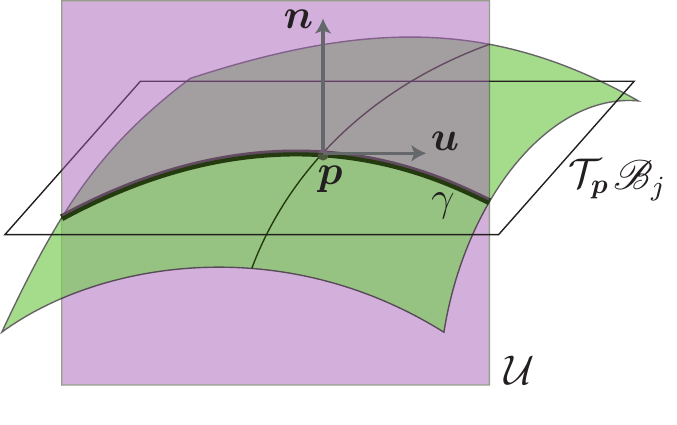}
% }
% \subfigure[]{
% \includegraphics[scale=0.5]{radius_schema_normal_curvature.pdf}
% }
\caption{\label{fig:subadv_nonlin}Normal section of the boundary $\bo_{i,j}$ with respect to plane $\mathcal{U} = \text{span} (\boldsymbol{n}, \boldsymbol{u})$, where $\boldsymbol{n}$ is the normal to the boundary at $\boldsymbol{p}$, and $\boldsymbol{u}$ is an arbitrary in the tangent space $\mathcal{T}_{\boldsymbol{p}}  (\bo_{i,j})$.} % (b) Example where the proposed curvature definition is different from the traditional notion of curvature, as the inverse of the oscullating circle (the dotted curve represents the oscullating circle).}
% \caption{\label{fig:subadv_nonlin}(a) Normal section of the boundary $\bo_{i,j}$ with respect to plane $\mathcal{U}$. (b) Example where the proposed curvature definition is different from the traditional notion of curvature, as the inverse of the oscullating circle (the dotted curve represents the oscullating circle). \red{Change figure on the right if you want to keep it...}}
\end{figure}
% \red{discussion here with respect to normal curvature in latex.}

In the following section, we show a precise characterization of the robustness to semi-random and random noise of nonlinear classifiers in terms of the curvature of the decision boundaries $\kappa(\B_{i,j})$.

\subsection{Robustness to random and semi-random noise}
% with the two classes denoted by $k$ and $\lab(\x_0)$. 
We now establish bounds on the robustness to random and semi-random noise in the binary classification case. Let $\x_0$ be a datapoint classified as $\lab = \lab(\x_0)$. We first study the binary classification problem, where only classes $\lab$ and $k \in \{1, \dots, \numClass\} \backslash \{ \lab \}$ are considered. 
% $k \in \{1, \dots, \numClass\} \backslash \{ \lab \}$.
To simplify the notation, we let $\bo_{k} := \bo_{k, \lab}$ be the decision boundary between classes $k$ and $\lab$. In the case of the binary classification problem where classes $k$ and $\lab$ are considered, the semi-random robustness and adversarial (or worst-case) robustness defined in Eq. (\ref{eq:second_form_robustness}) can be re-written as follows:
\begin{align}
\label{eq:r_s_k}
\begin{split}
\r_\S^k = \argmin_{\r \in \S} \| \r \|_2 \text{ s.t. } f_{k} (\x_0+\r) \geq f_{\lab} (\x_0+\r),  \\
\r^k = \argmin_{\r} \| \r \|_2 \text{ s.t. } f_{k} (\x_0+\r) \geq f_{\lab} (\x_0+\r).
\end{split}
\end{align}
For a randomly chosen subspace, $\|\r_\S^k \|_2$ is the random or semi-random robustness of the classifier, in the setting where only the two classes $k$ and $\lab$ are considered. Likewise, $\|\r^k  \|_2$ denotes the worst-case robustness in this setting. It should be noted that the global quantities $\r_\S^*$ and $\r^*$ are obtained from $\r_\S^k$ and $\r^k$ by taking the vectors with minimum norm over all classes $k$. 

% NOT NECESSARY HERE (AFFINE CLASSIFICATION)
%It should further be noted that, in the case where the classifier is affine, the subspace worst-case perturbation at $\x_0$ can be computed in closed form (for any subspace $\S \subseteq \R^d$): $\|\r_\S^*\|_2=\min_k\|\r_\S^k\|_2$ (\red{put the proof of this last thing before with the union, maybe in DeepFool chapter}), where
%\begin{align*}
%\r_\S^k=\frac{\left|f_{k}-f_{\lab}\right|}{\|\P_\S\w_k-\P_\S\w_{\lab}\|_2^2}(\P_\S\w_{k}-\P_\S\w_{\lab}),
%\end{align*}
%where $\P_\S$ is the projection operator corresponding to $\S$. 
% END NOT NECESSARY HERE

% The superscript $k$ indicates the fact that we consider the binary classification problem with $k$.
% Moreover, since the results we derive are for a fixed datapoint $\x_0$, we remove the explicit dependence on $\x_0$ in our notations of robustness (e.g., $\r_{\S}^k$ instead of $\r_{\S}^k $).

The following result gives upper and lower bounds on the ratio $\frac{\|\r^k_\S\|_2}{\|\r^k\|_2}$ in function of the curvature of the boundary separating class $k$ and $\lab$.
% BINARY MAIN THEOREM
\begin{theorem}
\label{thm:mainThm_constantCurvature_binary}
Let $\S$ be a random $m$-dimensional subspace of $\mathbb{R}^d$. Let $\kappa := \kappa(\bo_k)$. Assuming that the curvature satisfies
\begin{align*}
\kappa \leq \frac{C}{\gatwo \|\r^k\|_2} \frac{m}{d},
\end{align*}
the following inequality holds between the semi-random robustness $\|\r^k_\S\|_2$ and the adversarial robustness $\|\r^k\|_2$:
\begin{equation}
 \left( 1-C_1 \|\r^k\|_2\kappa \gatwo \frac{d}{m} \right) \sqrt{\gaone} \sqrt{\frac{d}{m}} 
\leq \frac{\|\r^k_\S\|_2}{\|\r^k\|_2}
\leq \left(1+C_2\|\r^k\|_2\kappa \gatwo \frac{d}{m} \right) \sqrt{\gatwo} \sqrt{\frac{d}{m}} 
% \label{eq:mainThm_eq}
\end{equation}
with probability larger than $1- 4 \delta$. We recall that $\gaone$ and $\gatwo$ are defined in Eq. (\ref{eq:chap_semirandom_gaone}, \ref{eq:chap_semirandom_gatwo}). The constants are $C = 0.2, C_1 = 0.625, C_2 = 2.25$.
\end{theorem}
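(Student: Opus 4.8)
The plan is to reduce the nonlinear problem to a spherical one by \emph{sandwiching} the decision boundary $\bo_k$ between two spheres of radius $q := 1/\kappa$, and then to solve the resulting spherical semi-random problem explicitly. Let $\p^\ast$ denote the point of $\bo_k$ closest to $\x_0$, so that $\r^k = \p^\ast - \x_0$ is normal to the boundary; write $\n$ for the outward unit normal at $\p^\ast$ (pointing into $\mathcal{R}_k$) and $\|\r^k\|_2$ for the distance. Since $q(\bo_k) = 1/\kappa = q$ and $q(\bo_k) = \inf_{\p} q_{k,\lab}(\p) \le q_{k,\lab}(\p^\ast)$, one can inscribe a ball of radius $q$ inside $\mathcal{R}_{\lab}$ and another inside $\mathcal{R}_k$, both tangent to $\bo_k$ at $\p^\ast$; by the tangency property noted after Eq.~(\ref{eq:radius_ball}) their centers lie along $\n$, namely at $\p^\ast - q\n$ and $\p^\ast + q\n$. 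The curvature assumption guarantees $\|\r^k\|_2 < q$, so $\x_0$ lies strictly inside the inner ball $\mathbb{B}(\p^\ast - q\n, q) \subseteq \mathcal{R}_{\lab}$. Because this inner ball is contained in $\mathcal{R}_{\lab}$, any perturbation reaching $\mathcal{R}_k$ must first leave it, which yields a \emph{lower} bound on $\|\r^k_\S\|_2$; because the outer ball $\mathbb{B}(\p^\ast + q\n, q)$ is contained in $\mathcal{R}_k$, entering it already causes misclassification, which yields an \emph{upper} bound. Thus I would prove $t_{\mathrm{in}} \le \|\r^k_\S\|_2 \le t_{\mathrm{out}}$, where $t_{\mathrm{in}}$ (resp.\ $t_{\mathrm{out}}$) is the smallest norm of a vector in $\S$ exiting the inner (resp.\ entering the outer) ball.

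The second step is to compute $t_{\mathrm{in}}$ and $t_{\mathrm{out}}$ in closed form. For a ball of center $\z$ and radius $q$, decomposing $\z - \x_0$ into its components in $\S$ and $\S^\perp$ shows that the constrained problem $\min_{\r\in\S}\|\r\|_2$ subject to $\|\x_0+\r-\z\|_2 = q$ reduces to reaching a sphere inside $\S$ centered at the projection of $\z-\x_0$; since here $\z - \x_0$ is parallel to $\n$, the optimal perturbation is aligned with $\P_\S \n$ and the problem collapses to a scalar quadratic. Writing $\beta := \|\P_\S \n\|_2$ and $u := \kappa\|\r^k\|_2$, I would obtain
\begin{align*}
\frac{t_{\mathrm{out}}}{\|\r^k\|_2} = \frac{u+2}{(u+1) + \sqrt{(u+1)^2 - (u+2)\,u/\beta^2}}\,\frac{1}{\beta}, \quad
\frac{t_{\mathrm{in}}}{\|\r^k\|_2} = \frac{2-u}{(1-u) + \sqrt{(1-u)^2 + (2-u)\,u/\beta^2}}\,\frac{1}{\beta}.
\end{align*}
Both expressions converge to $1/\beta$ as $u \to 0$, recovering exactly the affine relation $\|\r^k_\S\|_2/\|\r^k\|_2 = 1/\|\P_\S\n\|_2$ underlying Theorem~\ref{th:linear}. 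The random geometry enters only through $\beta$: the concentration estimate behind Theorem~\ref{th:linear} gives $\tfrac{1}{\gatwo}\tfrac{m}{d} \le \beta^2 \le \tfrac{1}{\gaone}\tfrac{m}{d}$ with probability at least $1-4\delta$, whence $1/\beta \le \sqrt{\gatwo}\sqrt{d/m}$ and $1/\beta \ge \sqrt{\gaone}\sqrt{d/m}$.

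It then remains to control the two scalar factors. Setting $s := u/\beta^2$, the bound $\beta^2 \ge \tfrac{1}{\gatwo}\tfrac{m}{d}$ gives $s \le u\gatwo \tfrac{d}{m} = \|\r^k\|_2\kappa\gatwo\tfrac{d}{m}$, and the hypothesis $\kappa \le \tfrac{C}{\gatwo\|\r^k\|_2}\tfrac{m}{d}$ forces $s \le C = 0.2$. This smallness of $s$ does double duty: it keeps the discriminant $(u+1)^2 - (u+2)s$ positive (so the outer ball is actually reachable within $\S$, i.e.\ $t_{\mathrm{out}}$ is finite), and it lets me bound the two prefactors by elementary monotonicity in $s$, establishing $\tfrac{2-u}{(1-u)+\sqrt{(1-u)^2+(2-u)s}} \ge 1 - C_1 s$ and $\tfrac{u+2}{(u+1)+\sqrt{(u+1)^2-(u+2)s}} \le 1 + C_2 s$ for $s \le 0.2$ with $C_1 = 0.625$, $C_2 = 2.25$. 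Chaining these with the bounds on $1/\beta$ and with $s \le \|\r^k\|_2\kappa\gatwo\tfrac{d}{m}$ gives the stated two-sided inequality. I expect the main obstacle to be this last step: verifying that the prescribed constants $C_1, C_2$ (and the reachability threshold $C$) are large/small enough \emph{uniformly} over the admissible range of $u$ and $s$, rather than merely in the limit $u\to 0$, since the $u$-dependent numerators and the two different signs under the square roots must all be absorbed into clean linear-in-$s$ estimates.
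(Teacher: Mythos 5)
Your proposal is correct and follows essentially the same route as the paper: sandwich $\bo_k$ between two balls of radius $1/\kappa$ tangent at the closest boundary point $\x^*$ (one in $\mathcal{R}_k$ for the upper bound, one in $\mathcal{R}_{\lab}$ for the lower bound), solve the resulting spherical problem — the paper does this through a planar normal-section lemma built on the same quadratic, you through exact closed forms in $\beta=\|\P_\S\boldsymbol{n}\|_2$ — then invoke the affine concentration bound on $\beta$ and absorb the curvature correction into a linear-in-$s$ factor; I verified that your two prefactor inequalities with $C_1=0.625$ and $C_2=2.25$ do hold uniformly over $0\le u\le s\le 0.2$ (after clearing denominators they reduce to $-u(1+2C_2 s)+s(1-2C_2+2C_2 s+C_2^2 s^2)\le 0$ and $u(1-2C_1 s)+s(2C_1-1+2C_1 s-C_1^2 s^2)\ge 0$, both manifest on that range). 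The only differences are cosmetic: the paper phrases $\beta$ as $\cos\hat\theta$ in a two-dimensional cross-section and outsources the algebra to its Lemma 3, and it additionally handles, via a limiting argument in a footnote, the case where the inscribed-ball supremum in the definition of $\kappa$ is not attained — a technicality your sketch omits.
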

The proof can be found in the appendix. This result shows that the bounds relating the robustness to random and semi-random noise to the worst-case robustness can be extended to nonlinear classifiers, provided the curvature of the boundary $\kappa(\bo_k)$ is sufficiently small. % Note that our upper and lower bounds now depend on the curvature $\kappa(\bo_k)$. 
In the case of linear classifiers, we have $\kappa(\B_k) = 0$, and we recover the result for affine classifiers from Theorem \ref{th:linear}.

To extend this result to multi-class classification, special care has to be taken. In particular, if $k$ denotes a class that has no boundary with class $\lab$, we have $\|\r^k\|_2 = \infty$, and the previous curvature condition cannot be satisfied. It is therefore crucial to \textit{exclude} such classes that have no boundary in common with class $\hat{k}$, or more generally, boundaries that are far from class $\lab$. We define the set $A$ of excluded classes $k$ where $\|\r^k\|_2$ is large
\begin{align}
\label{eq:definition}
A = \{ k: \|\r^k\|_2 \geq 1.45 \sqrt{\gatwo} \sqrt{\frac{d}{m}} \| \r^* \|_2 \}.
\end{align}
Note that $A$ is independent of $\S$, and depends only on $d$, $m$ and $\delta$. Moreover, the constants in~(\ref{eq:definition}) were chosen for simplicity of exposition.

Assuming a curvature constraint \textit{only on the close enough classes}, the following result establishes a simplified relation between $\|\r^*_\S\|_2$ and $\|\r^*\|_2$.
\begin{corollary}
\label{corr:nonlinear}
Let $\S$ be a random $m$-dimensional subspace of $\mathbb{R}^d$. Assume that, for all $k \notin A$, we have
\begin{align}
\label{eq:curvature_condition}
\kappa(\bo_k) \|\r^k\|_2 \leq \frac{0.2}{\gatwo} \frac{m}{d}
\end{align}
Then, we have
\begin{equation}
0.875 \sqrt{\gaone} \sqrt{\frac{d}{m}} \|\r^*\|_2 \leq \|\r^*_\S\|_2 \leq 1.45 \sqrt{\gatwo} \sqrt{\frac{d}{m}} \|\r^*\|_2
\end{equation}
with probability larger than $1- 4 (L+2) \delta$.
\end{corollary}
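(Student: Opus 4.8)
The plan is to reduce the multi-class statement to the binary result of Theorem~\ref{thm:mainThm_constantCurvature_binary} by exploiting that both global robustness quantities decompose as minima over the pairwise problems. With the tie-breaking convention of the footnote in Section~\ref{sec:definition_semirandom}, a perturbation $\r$ changes the estimated label of $\x_0$ if and only if $f_k(\x_0+\r) \geq f_{\lab}(\x_0+\r)$ for some $k \neq \lab$, so that $\|\r^*\|_2 = \min_{k} \|\r^k\|_2$ and $\|\r^*_\S\|_2 = \min_{k} \|\r^k_\S\|_2$, where $\r^k,\r^k_\S$ are the per-class quantities of Eq.~(\ref{eq:r_s_k}). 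For every $k \notin A$, the hypothesis (\ref{eq:curvature_condition}) is precisely the curvature assumption of Theorem~\ref{thm:mainThm_constantCurvature_binary} rewritten: multiplying it by $\gatwo \frac{d}{m}$ gives $\|\r^k\|_2 \kappa(\bo_k) \gatwo \frac{d}{m} \leq 0.2$. Substituting this into the theorem's factors $1 - C_1\|\r^k\|_2\kappa(\bo_k)\gatwo\frac{d}{m}$ and $1 + C_2\|\r^k\|_2\kappa(\bo_k)\gatwo\frac{d}{m}$ collapses them to the numerical constants $1 - 0.625\cdot 0.2 = 0.875$ and $1 + 2.25\cdot 0.2 = 1.45$. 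Hence, for each $k \notin A$ and with probability at least $1-4\delta$, I obtain $0.875\sqrt{\gaone}\sqrt{\frac{d}{m}}\|\r^k\|_2 \leq \|\r^k_\S\|_2 \leq 1.45\sqrt{\gatwo}\sqrt{\frac{d}{m}}\|\r^k\|_2$.

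For the upper bound I would first argue that the class $k^*$ attaining $\|\r^*\|_2 = \min_k \|\r^k\|_2$ lies outside $A$. Indeed, by (\ref{eq:definition}), membership $k^*\in A$ would require $\|\r^{k^*}\|_2 \geq 1.45\sqrt{\gatwo}\sqrt{\frac{d}{m}}\|\r^*\|_2$, and since $\|\r^{k^*}\|_2 = \|\r^*\|_2$ this forces $1.45\sqrt{\gatwo}\sqrt{\frac{d}{m}} \leq 1$, which is impossible in the regime of interest where $\sqrt{\frac{d}{m}} \geq 1$ and $1.45\sqrt{\gatwo} > 1$. Applying the per-class upper bound to $k^* \notin A$ then transfers directly: $\|\r^*_\S\|_2 \leq \|\r^{k^*}_\S\|_2 \leq 1.45\sqrt{\gatwo}\sqrt{\frac{d}{m}}\|\r^{k^*}\|_2 = 1.45\sqrt{\gatwo}\sqrt{\frac{d}{m}}\|\r^*\|_2$.

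For the lower bound I need a uniform lower estimate on $\|\r^k_\S\|_2$ over \emph{all} $k$, split into two cases. If $k \notin A$, the per-class lower bound together with $\|\r^k\|_2 \geq \|\r^*\|_2$ gives $\|\r^k_\S\|_2 \geq 0.875\sqrt{\gaone}\sqrt{\frac{d}{m}}\|\r^*\|_2$. If $k \in A$, where no curvature control is available, I instead use the elementary domination $\|\r^k_\S\|_2 \geq \|\r^k\|_2$ (the constrained minimum over $\S$ exceeds the unconstrained one) with the defining property of $A$: $\|\r^k_\S\|_2 \geq \|\r^k\|_2 \geq 1.45\sqrt{\gatwo}\sqrt{\frac{d}{m}}\|\r^*\|_2 \geq 0.875\sqrt{\gaone}\sqrt{\frac{d}{m}}\|\r^*\|_2$, the last inequality following from $\gaone \leq \gatwo$ and $1.45 > 0.875$. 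Taking the minimum over $k$ yields $\|\r^*_\S\|_2 \geq 0.875\sqrt{\gaone}\sqrt{\frac{d}{m}}\|\r^*\|_2$.

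Finally, a union bound over the at most $L$ classes $k \notin A$, each failing with probability $4\delta$ by Theorem~\ref{thm:mainThm_constantCurvature_binary}, makes every invoked per-class inequality hold simultaneously, so the stated $1 - 4(L+2)\delta$ is a loose but valid bound. The step I expect to demand the most care is the treatment of the excluded set $A$: these far-away classes carry no curvature guarantee, so the argument cannot call the binary theorem on them and must instead lean on the trivial bound $\|\r^k_\S\|_2 \geq \|\r^k\|_2$ and on the threshold in (\ref{eq:definition}) being large enough to keep them above the lower estimate, while the \emph{same} constant $1.45$ must guarantee that the adversarial minimizer $k^*$ falls outside $A$ so the upper bound transfers. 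Matching the constant $1.45$ on both sides of this bookkeeping is exactly what makes the two bounds close consistently.
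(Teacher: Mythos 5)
Your proof is correct and follows essentially the same route as the paper: apply Theorem \ref{thm:mainThm_constantCurvature_binary} with the collapsed constants $1-0.625\cdot 0.2=0.875$ and $1+2.25\cdot 0.2=1.45$ to every $k\notin A$, handle $k\in A$ via the domination $\|\r^k_\S\|_2\geq\|\r^k\|_2$ and the threshold in (\ref{eq:definition}), and note that the adversarial minimizer lies outside $A$ so the upper bound transfers. The only (harmless) difference is that you treat the classes in $A$ deterministically, whereas the paper's Lemma \ref{lem:single_class_multiclass_nonLin} wraps that step in an extra failure probability $t$; your accounting is in fact slightly tighter than the stated $1-4(L+2)\delta$.
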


Under the curvature condition in (\ref{eq:curvature_condition}) on the boundaries between $\lab$ and classes in $A^c$, our result shows that the robustness to random and semi-random noise exhibits the same behavior that has been observed earlier for linear classifiers in Theorem \ref{th:linear}. In particular, $\|\r^*_\S\|_2$ is precisely related to the adversarial robustness $\|\r^*\|_2$ by a factor of $\sqrt{\nicefrac{d}{m}}$. In the random regime ($m=1$), this factor becomes $\sqrt{d}$, and shows that in high dimensional classification problems, classifiers with sufficiently flat boundaries are much more robust to random noise than to adversarial noise. More precisely, the addition of a sufficiently small random noise does not change the label of the image, even if the image lies very closely to the decision boundary (i.e., $\|\r^*\|_2$ is small). However, in the semi-random regime where an adversarial perturbation is found on a randomly chosen subspace of dimension $m$, the $\sqrt{\nicefrac{d}{m}}$ factor 
that relates $\|\r^*_\S\|_2$ to $\|\r^*\|_2$ shows that robustness to semi-random noise might not be achieved even if $m$ is chosen to be a tiny fraction of $d$ (e.g., $m = 0.01 d$). In other words, if a classifier is highly vulnerable to adversarial perturbations, then it is also vulnerable to noise that is overwhelmingly random and only mildly adversarial (i.e. worst-case noise sought in a random subspace of low dimensionality $m$).
% might not be enough to achieve robustness even in cases where $m$ is taken as a tiny fraction of $d$ (e.g., $m = 0.01 d$). In other words, if a classifier is highly vulnerable to adversarial perturbations ($m=d$), then it may be also vulnerable to semi-random perturbations even for small values of $m$.

It is important to note that the curvature condition in~(\ref{eq:curvature_condition}) is \textit{not} an assumption on the curvature of the global decision boundary, but rather an assumption on the decision boundaries between pairs of classes. The distinction here is significant, as junction points where two decision boundaries meet might actually have a very large (or infinite) curvature (even in linear classification settings), and the curvature condition in~(\ref{eq:curvature_condition}) typically does not hold for this global curvature definition. We refer to our experimental section for a visualization of this phenomenon.

We finally stress that our results in Theorem \ref{thm:mainThm_constantCurvature_binary} and Corollary \ref{corr:nonlinear} are applicable to \textit{any} classifier, provided the decision boundaries are smooth.
% \footnote{In the case of non-differentiable decision boundary, the curvature $\kappa$ can be infinite. In these cases, one can define an alternative definition of the curvature, where the $\inf$ in Eq. (\ref{eq:def_kappa}) is taken only over differentiable points. Then, our results hold provided the boundary is differentiable at $\x_0 + \r^*(\x_0)$.} 
If we assume prior knowledge on the considered family of classifiers and their decision boundaries (e.g., the decision boundary is a union of spheres in $\R^d$), similar bounds can further be derived under less restrictive curvature conditions (compared to Eq. (\ref{eq:curvature_condition})). % For example, if the decision boundary is known to be an arbitrary union of spheres in $\R^d$, less res

% If we consider specific families of classifiers (e.g., RBF-SVM classifiers), it might be possible to achieve similar bounds under less restrictive conditions (Eq. (\ref{eq:curvature_condition}). For instance, if the decision boundary is known to be an arbitrary union of spheres in $\R^d$, 

% \red{maybe say a few words that the results can be tightened.}

% Another interesting implication of the constraint in Corollary \ref{corr:nonlinear} is that a classifier should have both sufficiently large $\|\r^*\|_2_2$ or sufficiently curved decision boundaries in order to achieve robustness in semi-random regimes. In the next section, we provide estimates of the  quantity of interest $\kappa (\B_k) \| \r^k \|_2$  for different classifiers.

% \subsection{Curvature}

% \subsection{Main result}

% \section{Experiments}

\section{Experiments}
% \label{sect:semiRandom_experiments}
\label{sec:experiments_semirandom}

% \red{Put additional experiments showing the large gap between adversarial and random noise robustness}

\subsection{Experimental results}

We now evaluate the robustness of different image classifiers to random and semi-random perturbations, and assess the accuracy of our bounds on various datasets and state-of-the-art classifiers. Specifically, our theoretical results show that the robustness $\|\r^*_\S(\x)\|_2$ of classifiers satisfying the curvature property precisely behaves as $\sqrt{\nicefrac{d}{m}} \|\r^*(\x)\|_2$. We first check the accuracy of these results in different classification settings.
For a given classifier $f$ and subspace dimension $m$, we define $$\beta(f;m)=\sqrt{\nicefrac{m}{d}}\frac{1}{|\mathscr{D}|}\sum_{\x\in\mathscr{D}}\frac{\|\r^*_\S(\x)\|_2}{\|\r^*(\x)\|_2},$$ where $\S$ is chosen randomly for each sample $\x$ and $\mathscr{D}$ denotes the test set. This quantity provides indication to the accuracy of our $\sqrt{\nicefrac{d}{m}} \|\r^* (\x)\|_2$ estimate of the robustness, and should ideally be equal to $1$ (for sufficiently large $m$).
Since $\beta$ is a random quantity (because of $\S$), we report both its mean and standard deviation for different networks in Table~\ref{tab:deviations}. It should be noted that finding $\|\r^*_\S\|_2$ and $\|\r^*\|_2$ involves solving the optimization problem in (\ref{eq:def_subspace_optimal}). We have used a similar approach to \cite{moosavi2015deepfool} to find subspace minimal perturbations. For each network, we estimate the expectation by averaging $\beta(f; m)$ on 1000 random samples, with $\S$ also chosen randomly for each sample.

\begin{table}
\centering
\begin{tabular}{@{}lcccccc@{}}
\toprule
&\multicolumn{6}{c}{$\nicefrac{m}{d}$}\\
\cmidrule{1-7}
 \textbf{Classifier}         & 1 &\nicefrac{1}{4}&\nicefrac{1}{16}&\nicefrac{1}{36}&\nicefrac{1}{64}&\nicefrac{1}{100}\\
 \midrule
LeNet (MNIST) &$1.00$&$1.00\pm0.06$&$1.01\pm0.12$&$1.03\pm0.20$&$1.01\pm0.26$&$1.05\pm0.34$\\
LeNet (CIFAR-10) &$1.00$&$1.01\pm0.03$&$1.02\pm0.07$&$1.04\pm0.10$&$1.06\pm0.14$&$1.10\pm0.19$\\
VGG-F (ImageNet)&$1.00$&$1.00\pm0.01$&$1.02\pm0.02$&$1.03\pm0.04$&$1.03\pm0.05$&$1.04\pm0.06$ \\
VGG-19 (ImageNet)&$1.00$&$1.00\pm0.01$&$1.02\pm0.03$&$1.02\pm0.05$&$1.03\pm0.06$&$1.04\pm0.08$\\
\bottomrule
\end{tabular}
\caption{\label{tab:deviations} $\beta(f; m)$ for different classifiers $f$ and different subspace dimensions $m$. The VGG-F and VGG-19 are respectively introduced in \citep{chatfield2014, simonyan2014very}.}
\end{table}
% Theorem \ref{thm:mainThm_constantCurvature_binary}
Observe that $\beta$ is suprisingly close to 1, even when $m$ is a small fraction of $d$. This shows that our quantitative analysis provide very accurate estimates of the robustness to semi-random noise. We visualize the robustness to random noise, semi-random noise (with $m = 10$) and worst-case perturbations on a sample image in Fig. \ref{fig:sequence}. While random noise is clearly perceptible due to the $\sqrt{d} \approx 400$ factor, semi-random noise becomes much less perceptible even with a relatively small value of $m = 10
$, thanks to the $\nicefrac{1}{\sqrt{m}}$ factor that attenuates the required noise to misclassify the datapoint. It should be noted that the robustness of neural networks to adversarial perturbations has previously been observed empirically in \cite{szegedy2013intriguing}, but we provide here a quantitative and generic explanation for this phenomenon.

\begin{figure}[ht]
\centering
\subfigure[]{
\includegraphics[width=0.2\textwidth]{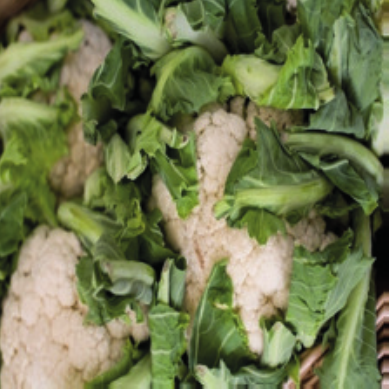}
% \caption{\label{fig:seq0}}
}
\subfigure[]{
\includegraphics[width=0.2\textwidth]{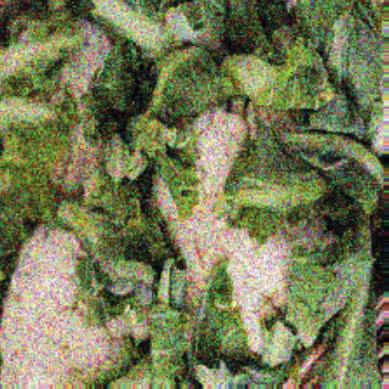}
% \caption{\label{fig:seq1}}
}
\subfigure[]{
\includegraphics[width=0.2\textwidth]{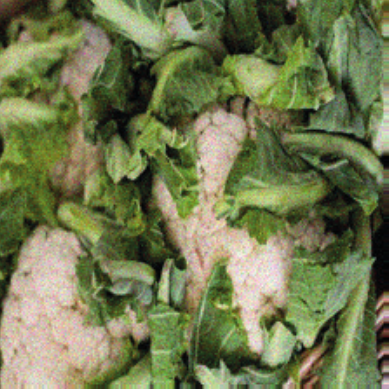}
% \caption{\label{fig:seq2}}
}
\subfigure[]{
\includegraphics[width=0.2\textwidth]{seq0.eps}
% \caption{\label{fig:seq3}}
}
\caption{\label{fig:sequence} (a) Original image classified as ``Cauliflower''. Fooling perturbations for VGG-F network: (b) Random noise, (c) Semi-random perturbation with $m=10$, (d) Worst-case perturbation, all wrongly classified as ``Artichoke''.}
\end{figure}

% \red{START: Here change depending on the notion of curvature finally adopted}

The high accuracy of our bounds for different state-of-the-art classifiers, and different datasets suggest that the decision boundaries of these classifiers have limited curvature $\kappa(\B_k)$, as this is a key assumption of our theoretical findings. To support the validity of this curvature hypothesis in practice, we visualize two-dimensional sections of the classifiers' boundary in Fig.~\ref{fig:visualize} in three different settings. Note that we have opted here for a visualization strategy rather than the numerical estimation of $\kappa(\B)$, as the latter quantity is difficult to approximate in practice in high dimensional problems. In Fig.~\ref{fig:visualize}, $\x_0$ is chosen randomly from the test set for each data set, and the decision boundaries are shown in the plane spanned by $\r^*$ and $\r^*_\S$, where $\S$ is a random \textit{direction} (i.e., $m=1$). 
%Specifically, the following procedure is applied to sample from the boundary $\bo_k$:
%\begin{enumerate}
%\item Choose a random direction, and estimate $\r^*_\S$ using Algorithm \ref{alg:multiclass_semirandom}. Compute also the worst-case perturbation $\r^*$.
%\item For each discretized value $\alpha_i \in [-T,T]$, do:
%\begin{enumerate}
%\item Define the datapoint $\x_i = \x_0 + \r^* + \alpha_i (\r_{\S}^* - \r^*)$. % \alpha_i \r^* + (1-\alpha_i) \r_{\S}^*
%\item Project the datapoint $\x_i$ onto the decision boundary using Algorithm \ref{alg:multiclass_semirandom}: find the minimal subspace perturbation $\r_{i}^{\text{2d}} \in \text{span} (\r^*, \r_{\S}^*)$ such that $\x_i + \r_{i}^{\text{2d}}$ belongs to the decision boundary. Plot the projected point. % \red{Exactly, how do you do that (is a threshold used?)}
%\end{enumerate}
%\end{enumerate}
% \red{Ask to Seyed: is this correct? How do you color things in red and green? Do you do linear combinations from the first point, or you take the current point, go one step in the direction of $\x_0$ and reproject? Do you constrain $\alpha \in [0,1]$?}
% Specifically, the decision boundaries are sampled using a variation of the method in ~\citep{moosavi2016} in this two-dimensional subspace. \red{We skip the details of this method for the sake of brevity.} \red{Explain in details here...}
Different colors on the boundary correspond to boundaries with different classes. It can be observed that the curvature of the boundary is very small except at ``junction'' points where the boundary of two different classes intersect. Our curvature assumption in Eq. (\ref{eq:curvature_condition}), which only assumes a bound on the curvature of the decision boundary between pairs of classes ${\lab(\x_0)}$ and $k$ (but not on the \textit{global} decision boundary that contains junctions with high curvature) is therefore adequate to the decision boundaries of state-of-the-art classifiers according to Fig.~\ref{fig:visualize}. Interestingly, the assumption in Corollary \ref{corr:nonlinear} is satisfied by taking $\kappa$ to be an empirical estimate of the curvature of the planar curves in Fig. \ref{fig:visualize} (a) for the dimension of the subspace being a \textit{very} small fraction of $d$; e.g., $m = 10^{-3} d$. While not reflecting the curvature $\kappa(\B_k)$ that drives the assumption of our theoretical analysis, this result still seems to suggest that the curvature assumption holds in practice, and that the curvature of such classifiers is therefore very small. It should be noted that a related empirical observation was made in~\cite{goodfellow2014}; our work however provides a precise quantitative analysis on the relation between the curvature and the robustness in the semi-random noise regime.

% \red{END: Here change depending on the notion of curvature finally adopted}

\begin{figure}[ht]
\centering
\subfigure[VGG-F (ImageNet)] {
\includegraphics[width=0.3\textwidth]{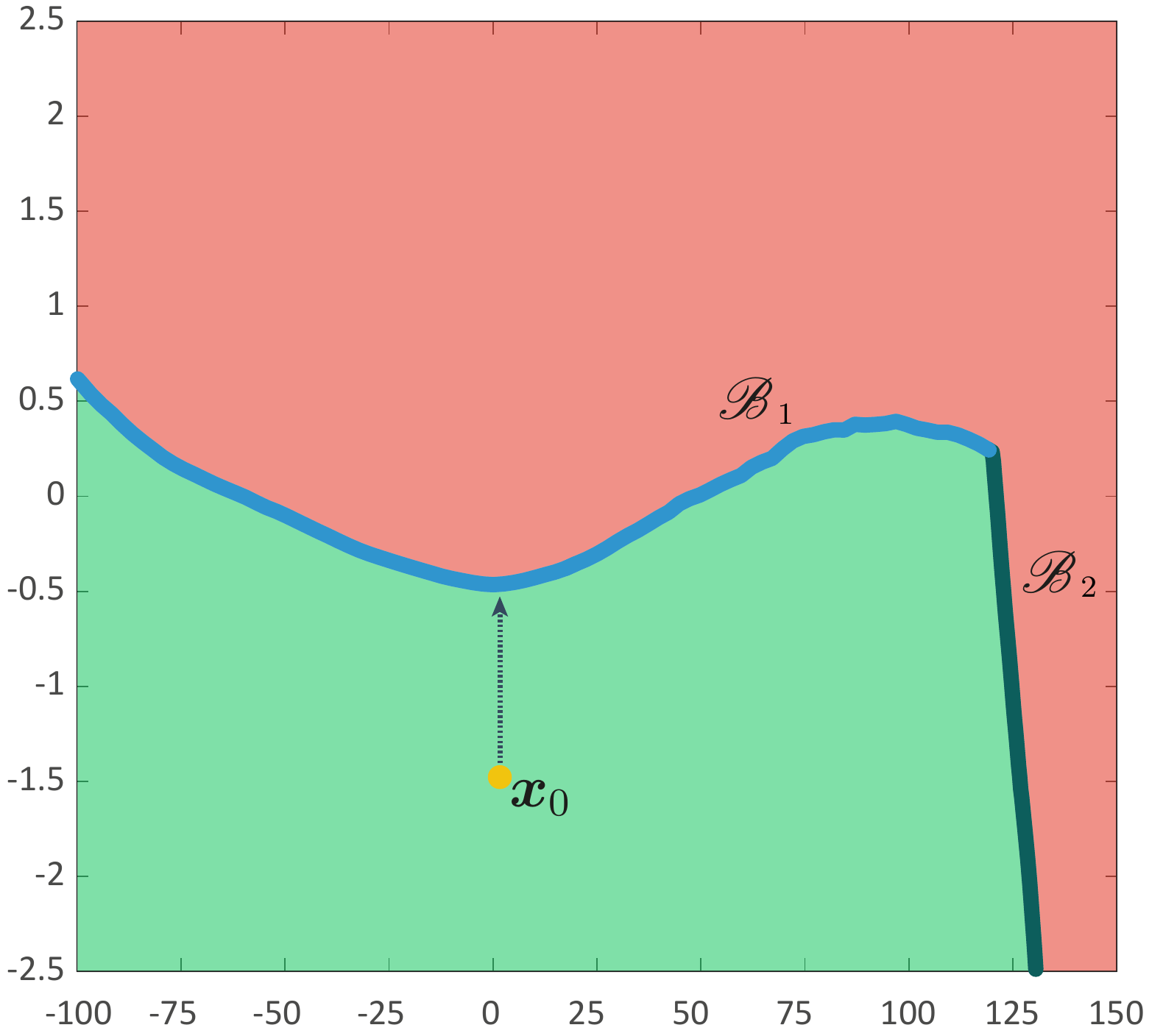}
}
\subfigure[LeNet (CIFAR)]{
\includegraphics[width=0.3\textwidth]{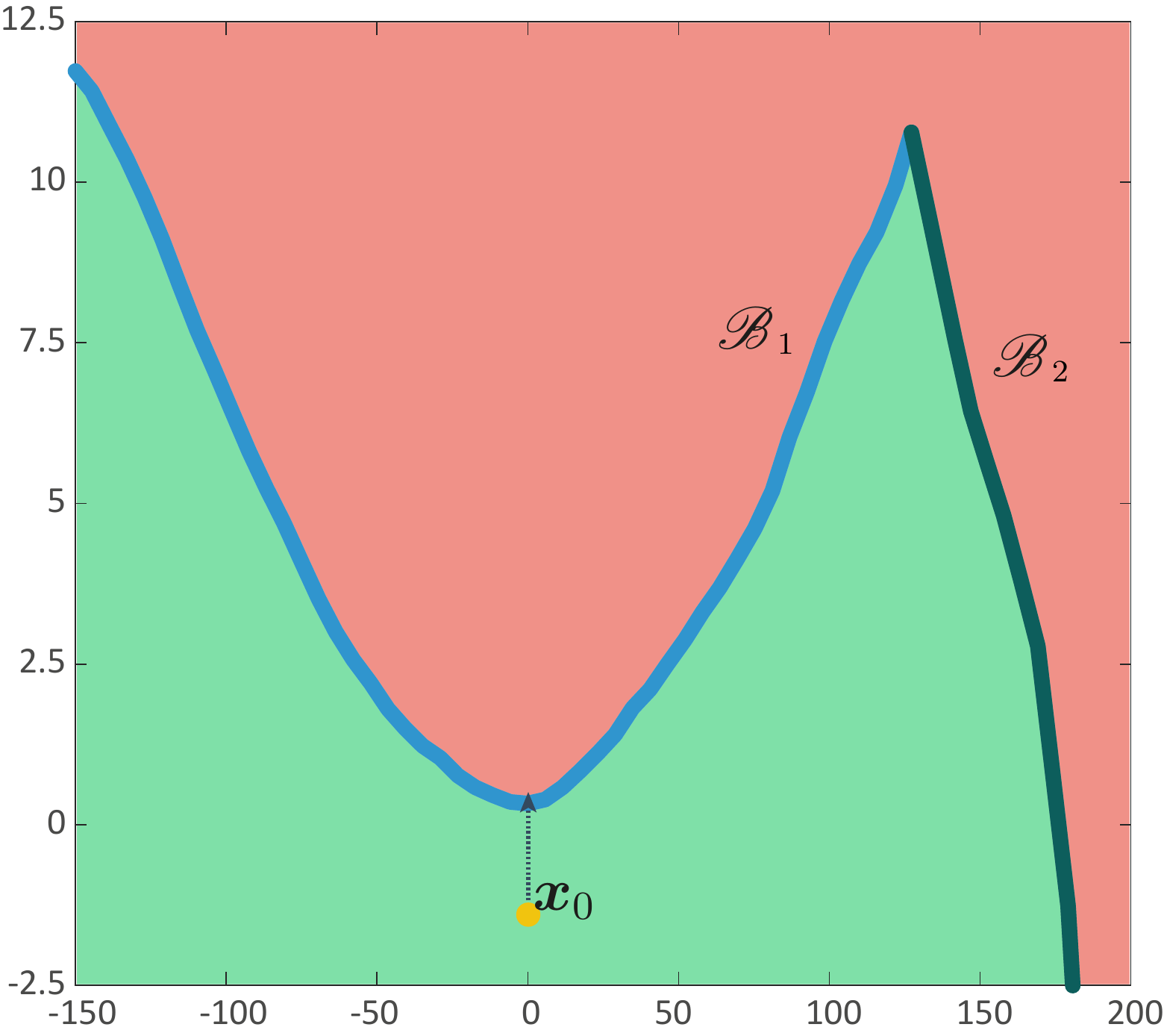}
}
\subfigure[LeNet (MNIST)]{
\includegraphics[width=0.3\textwidth]{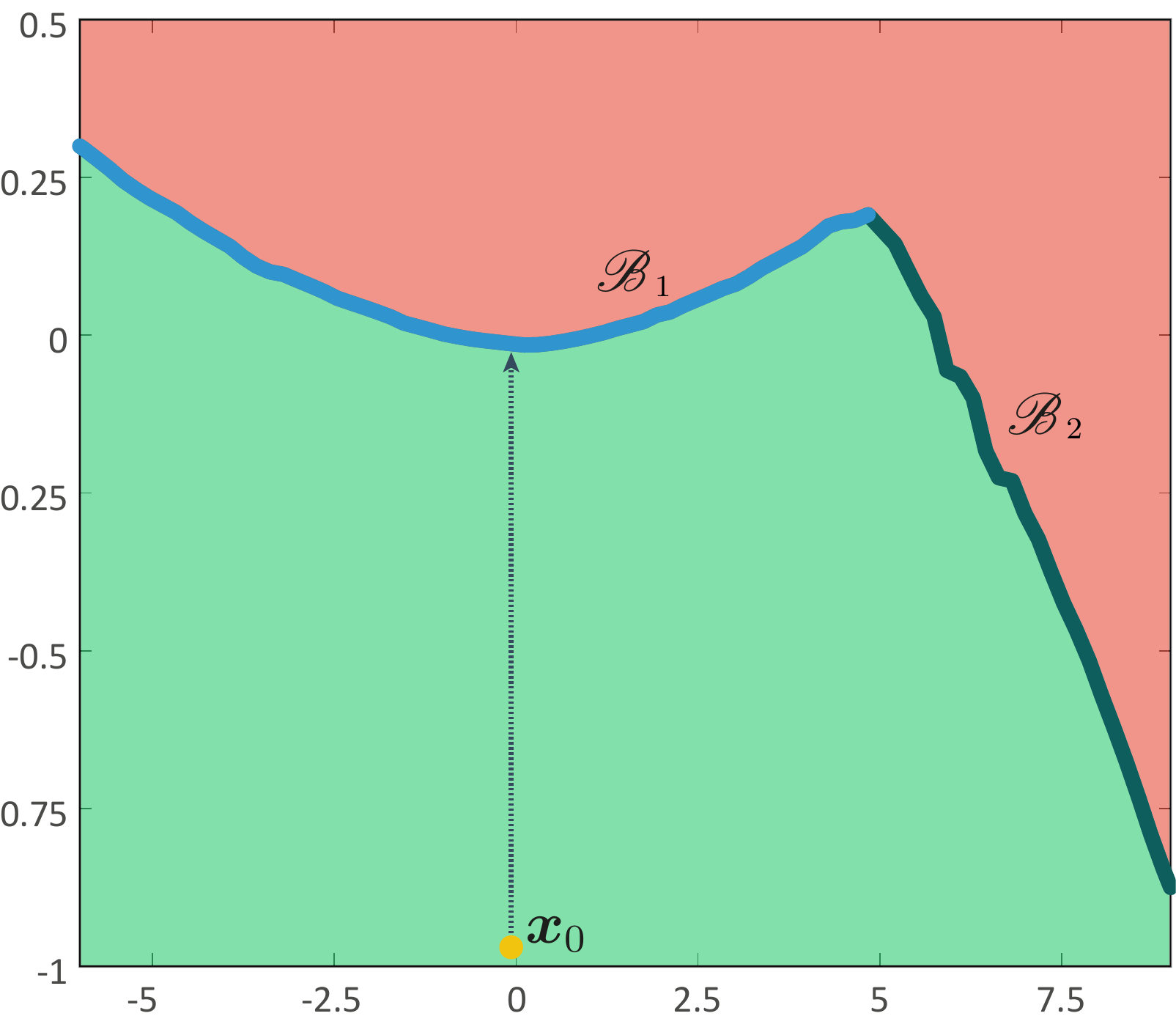}
}
\caption{\label{fig:visualize}Boundaries of three classifiers near randomly chosen samples. Axes are normalized by the corresponding $\|\r^*\|_2$ since our assumption in the theoretical bound (Corollary \ref{corr:nonlinear}) depends on the product of $\|\r^*\|_2 \kappa$. Note the difference in range between $x$ and $y$ axes. Note also that the range of horizontal axis in (c) is much smaller than the other two, hence the illustrated boundary is more curved.}
\end{figure}

We now show a simple demonstration of the vulnerability of classifiers to semi-random noise in Fig.~\ref{fig:nips}, where a structured message is hidden in the image and causes data misclassification. Specifically, we consider $\S$ to be the span of random translated and scaled versions of words ``NIPS'', ``SPAIN'' and ``2016'' in an image, such that $\lfloor\nicefrac{d}{m}\rfloor=228$. The resulting perturbations in the subspace are therefore linear combinations of these words with different intensities.\footnote{This example departs somehow from the theoretical framework of this paper, where \textit{random} subspaces were considered. However, this empirical example suggests that the theoretical findings in this paper seem to approximately hold when the subspace $\S$ have statistics that are close to a random subspace.} The perturbed image $\x_0 + \r_{\S}^*$ shown in Fig.~\ref{fig:nips} (c) is clearly indistinguishable from Fig.~\ref{fig:nips} (a). This shows that imperceptibly small structured messages can be added to an image causing data misclassification. % We believe that this example, where an imperceptibly small structured message is added to the image to change the estimated label of the classifier, might possibly be extended and lead to automatic watermarking and steganography applications, where data is hidden in an image for tracking or to send hidden messages.

% to hide a structured message inside the image that fools state-of-the-art classifiers might possibly 

\begin{figure}[ht]
\centering
\subfigure[Image of a ``Potflower'']{
\includegraphics[width=0.3\textwidth]{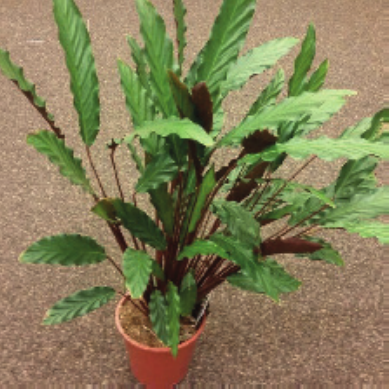}
}
\subfigure[Structured perturbation containing random placement of words ``NIPS'', ``2016'', and ``SPAIN'']{
\includegraphics[width=0.3\textwidth]{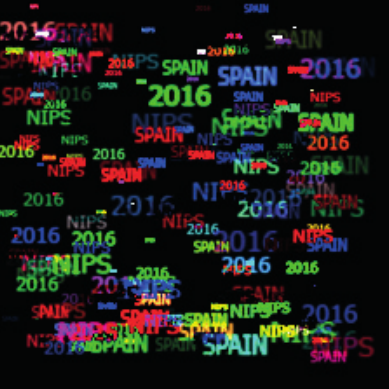}
}
\subfigure[Classified as ``Pineapple'']{
\includegraphics[width=0.3\textwidth]{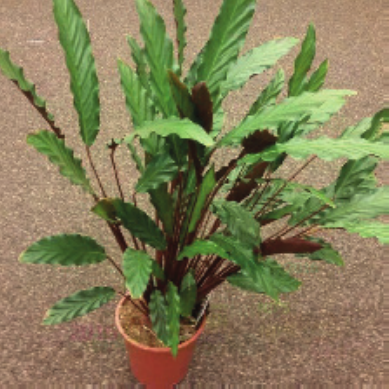}
}
\caption{A fooling hidden message, $\S$ consists of linear combinations of random words.}
\label{fig:nips}
\end{figure}
\section{Conclusion}

In this work, we precisely characterized the robustness of classifiers in a novel semi-random noise regime that generalizes the random noise regime. Specifically, our bounds relate the robustness in this regime to the robustness to adversarial perturbations. Our bounds depend on the \textit{curvature} of the decision boundary, the data dimension, and the dimension of the subspace to which the perturbation belongs. Our results show, in particular, that when the decision boundary has a small curvature, classifiers are robust to random noise in high dimensional classification problems (even if the robustness to adversarial perturbations is relatively small). Moreover, for semi-random noise that is mostly random and only mildly adversarial (i.e., the subspace dimension is small), our results show that state-of-the-art classifiers remain vulnerable to such perturbations. To improve the robustness to semi-random noise, our analysis encourages to impose geometric constraints on the curvature of the decision boundary, as we have shown the existence of an intimate relation between the robustness of classifiers and the curvature of the decision boundary.
\subsubsection*{Acknowledgments}
We would like to thank the anonymous reviewers for their helpful comments. We thank Omar Fawzi and Louis Merlin for the fruitful discussions. We also gratefully acknowledge the support of NVIDIA Corporation with the donation of the Tesla K40 GPU used for this research. This work has been partly supported by the Hasler Foundation, Switzerland, in the framework of the CORA project.
%\section*{References}
\medskip
\small{
\bibliographystyle{apalike}
\bibliography{bibliography.bib}
}
\section*{Appendix}
\label{app:semirandom}
\renewcommand{\thesubsection}{A.\arabic{subsection}}
\subsection{Proof of Theorem \ref{th:main_result_linear} (affine classifiers)}
\begin{lemma}[\cite{dasgupta2003elementary}]
\label{lem:jl_l}
Let $Y$ be a point chosen uniformly at random from the surface of the $d$-dimensional sphere $\mathbb{S}^{d-1}$. Let the vector $Z$ be the projection of $Y$ onto its first $m$ coordinates, with $m < d$. Then,
\begin{enumerate}
\item If $\beta < 1$, then
\begin{align}
\Pbb \left( \| Z \|_2^2 \leq \frac{\beta m}{d}\right) \leq \beta^{m/2} \left( 1 + \frac{(1 - \beta) m}{(d-m)} \right)^{(d-m)/2} \leq \exp\left( \frac{m}{2} (1 - \beta + \ln \beta) \right).
\end{align}
\item If $\beta > 1$, then
\begin{align}
\Pbb \left( \| Z \|_2^2 \geq \frac{\beta m}{d} \right) \leq \beta^{m/2} \left( 1 + \frac{(1 - \beta) m}{(d-m)}\right)^{(d-m)/2} \leq \exp\left( \frac{m}{2} \left( 1 - \beta + \ln \beta \right) \right).
\end{align}
\end{enumerate}
\end{lemma}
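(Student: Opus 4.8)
The plan is to realize the uniform point on the sphere through Gaussians and then reduce each tail bound to a Chernoff estimate on a weighted sum of independent $\chi^2$ variables. First I would write $Y = X/\|X\|_2$ where $X = (X_1, \dots, X_d)$ has i.i.d. $N(0,1)$ entries, using the standard fact that a normalized standard Gaussian vector is distributed uniformly on $\mathbb{S}^{d-1}$. Setting $S_m = \sum_{i=1}^m X_i^2$ and $S_d = \sum_{i=1}^d X_i^2$, the projection satisfies $\|Z\|_2^2 = S_m/S_d$, so the event $\{\|Z\|_2^2 \le \beta m/d\}$ becomes $\{d\,S_m \le \beta m\, S_d\}$, which I rewrite as the positivity of a signed combination of the independent squares, namely $\{\beta m \sum_{i=m+1}^d X_i^2 - (d - \beta m)\sum_{i=1}^m X_i^2 \ge 0\}$.

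For case 1 ($\beta < 1$) I would apply Markov's inequality to the exponential of this combination with a free parameter $t > 0$; by independence the bound factorizes into $\prod_{i>m}\E[e^{t\beta m X_i^2}] \cdot \prod_{i\le m}\E[e^{-t(d-\beta m)X_i^2}]$. Using the single-coordinate identity $\E[e^{sX^2}] = (1-2s)^{-1/2}$, valid for $s < 1/2$, this equals $(1 - 2t\beta m)^{-(d-m)/2}(1 + 2t(d-\beta m))^{-m/2}$, where the constraint $2t\beta m < 1$ reduces to $m < d$ at the optimizer and so is harmless. Minimizing the logarithm over $t$ yields the closed form $t = (1-\beta)/\big(2\beta(d-\beta m)\big) > 0$, at which the two base factors collapse to $1 - 2t\beta m = (d-m)/(d-\beta m)$ and $1 + 2t(d-\beta m) = 1/\beta$. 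Substituting these back produces exactly $\beta^{m/2}\big(1 + (1-\beta)m/(d-m)\big)^{(d-m)/2}$, which is the first claimed inequality; the second then follows immediately from $1 + x \le e^x$ applied with $x = (1-\beta)m/(d-m)$, together with $\beta^{m/2} = e^{(m/2)\ln\beta}$.

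Case 2 ($\beta > 1$) is symmetric: the complementary event $\{\|Z\|_2^2 \ge \beta m/d\}$ becomes $\{(d-\beta m)\sum_{i\le m}X_i^2 - \beta m \sum_{i>m}X_i^2 \ge 0\}$, and the same Chernoff/MGF computation with optimizer $t = (\beta-1)/\big(2\beta(d-\beta m)\big)$ gives the identical product bound and hence the same exponential bound. Here I would first dispose of the trivial subcase $\beta m \ge d$: since $\|Z\|_2^2 \le \|Y\|_2^2 = 1 \le \beta m/d$ almost surely, the tail event is empty and the inequality holds vacuously, which also legitimizes assuming $d - \beta m > 0$ so that all the Gaussian MGFs are finite.

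The main obstacle is not conceptual but the bookkeeping in the $t$-optimization step: one must verify that the optimal $t$ lands in the admissible range where both MGFs converge (i.e.\ $2t\beta m < 1$ in case 1 and $2t(d-\beta m) < 1$ in case 2), and that the two base factors simplify to the clean values $1/\beta$ and $(d-m)/(d-\beta m)$. This \emph{exact} cancellation---rather than merely an upper bound---is what makes the product form sharp, so checking the algebra at the optimizer is the only delicate part of the argument.
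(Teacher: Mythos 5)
Your proposal is correct, and it is essentially the proof of record: the paper itself does not prove this lemma but imports it from \cite{dasgupta2003elementary}, and your Gaussian realization $Y = X/\|X\|_2$ followed by a Chernoff bound on $\beta m \sum_{i>m} X_i^2 - (d-\beta m)\sum_{i\le m} X_i^2$, with optimizers $t = (1-\beta)/\bigl(2\beta(d-\beta m)\bigr)$ and $t = (\beta-1)/\bigl(2\beta(d-\beta m)\bigr)$ collapsing the factors to $1/\beta$ and $(d-m)/(d-\beta m)$, is exactly the argument given in that reference. The only cosmetic point is that in the subcase $\beta m = d$ the tail event has probability zero rather than being literally empty, which your trivial-subcase remark covers in any event.
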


\begin{lemma}
\label{thm:jl_ours}
Let $\v$ be a random vector uniformly drawn from the unit sphere  $\mathbb{S}^{d-1}$, and $\P_m$ be the projection matrix onto the first $m$ coordinates. Then,
\begin{align}
\Pbb\left( \beta_1(\delta, m) \frac{m}{d} \leq \| \P_m \v \|_2^2 \leq \beta_2(\delta, m) \frac{m}{d} \right) \geq 1 - 2\delta,
\end{align}
with $\beta_1(\delta, m) = \max((1/e) \delta^{2/m}, 1-\sqrt{2(1-\delta^{2/m})}$, and $\beta_2(\delta, m) = 1 + 2 \sqrt{\frac{ \ln(1/\delta)}{m}} + \frac{2 \ln(1/\delta)}{m}$.
\end{lemma}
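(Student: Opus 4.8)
The plan is to obtain the two-sided bound by controlling each tail separately with the two parts of Lemma~\ref{lem:jl_l} and then combining them with a union bound. Since $\v$ is uniform on $\mathbb{S}^{d-1}$, the squared projection $\|\P_m \v\|_2^2$ has exactly the distribution of $\|Z\|_2^2$ in Lemma~\ref{lem:jl_l}, so both inequalities there apply verbatim. I would show that the lower-tail event $\Lcal=\{\|\P_m\v\|_2^2 \le \beta_1(\delta,m)\tfrac md\}$ and the upper-tail event $U=\{\|\P_m\v\|_2^2 \ge \beta_2(\delta,m)\tfrac md\}$ each have probability at most $\delta$. The event in the statement fails only if $\Lcal$ or $U$ occurs, so a union bound gives probability at least $1-2\delta$.

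For the upper tail I would invoke part~2 of Lemma~\ref{lem:jl_l} with $\beta=\beta_2(\delta,m)>1$. After taking logarithms, the requirement $\exp(\tfrac m2(1-\beta_2+\ln\beta_2))\le\delta$ reduces to the scalar inequality $\beta_2-1-\ln\beta_2\ge \tfrac{2\ln(1/\delta)}{m}$. Writing $t=\tfrac{2\ln(1/\delta)}{m}$ and $s=\sqrt{2t}$, the definition $\beta_2=1+s+\tfrac{s^2}{2}$ turns this into $\ln(1+s+\tfrac{s^2}{2})\le s$, which is immediate from $e^s\ge 1+s+\tfrac{s^2}{2}$ for $s\ge 0$.

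For the lower tail I would use part~1 with $\beta=\beta_1(\delta,m)<1$. The key structural observation is that $h(\beta):=\beta-1-\ln\beta$ is strictly decreasing on $(0,1)$ with $h(1)=0$, so the exact threshold $\beta^\star$ solving $h(\beta^\star)=\tfrac{2\ln(1/\delta)}{m}$ is the largest admissible value, and any candidate $\beta$ with $h(\beta)\ge \tfrac{2\ln(1/\delta)}{m}$ satisfies $\beta\le\beta^\star$ and hence yields tail probability $\le\delta$. Writing $u=\delta^{2/m}$, I would verify this separately for the two candidates whose maximum defines $\beta_1$: substituting $\beta=(1/e)u$ gives $h=(1/e)u+\ln(1/u)\ge\ln(1/u)$ directly, while for $\beta=1-\sqrt{2(1-u)}$, setting $w=\sqrt{2(1-u)}$ reduces the claim to $\ln\frac{1-w^2/2}{1-w}\ge w$, which follows since this function vanishes at $w=0$ and has nonnegative derivative on $(0,1)$. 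As each candidate lies below $\beta^\star$, so does their maximum $\beta_1$, and monotonicity of $h$ then gives $\Pbb(\Lcal)\le \delta$.

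The routine parts are the two elementary scalar inequalities; the one point needing care is the lower-tail argument, where taking the $\max$ of two candidates is legitimate only because each sits below the exact threshold $\beta^\star$ (so their maximum does too) — reasoning about $h(\max)$ directly would go the wrong way, since $h$ is decreasing. A minor bookkeeping check is that $\beta_1=\max(\cdot,\cdot)$ always stays in $(0,1)$ so that part~1 of Lemma~\ref{lem:jl_l} is applicable: the first candidate $(1/e)u$ is strictly positive, which covers the regime where the second candidate turns negative.
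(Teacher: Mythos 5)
Your proof is correct and follows essentially the same route as the paper's: both reduce Lemma~\ref{lem:jl_l} to the scalar condition $\beta-1-\ln\beta\ge\frac{2}{m}\ln(1/\delta)$ (equivalently $\beta e^{1-\beta}\le\delta^{2/m}$), verify it for $\beta_2$ and for each of the two lower-tail candidates whose maximum defines $\beta_1$, and finish with a union bound over the two tails. If anything, you are slightly more careful than the paper on two points it glosses over: the justification that taking the maximum of the two lower candidates is legitimate (via monotonicity of $h$ on $(0,1)$, or simply because the max equals one of two candidates each of which works), and the explicit check that $\beta_1$ stays positive when the candidate $1-\sqrt{2(1-\delta^{2/m})}$ turns negative; your derivative argument for the second candidate also supplies a proof of the inequality $\beta e^{1-\beta}\le 1-\tfrac12(1-\beta)^2$ that the paper asserts without proof.
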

\begin{proof}
Note first that the upper bound of Lemma \ref{lem:jl_l} can be bounded as follows:
\begin{align}
\beta^{m/2} \left( 1 + \frac{(1-\beta)m}{d-m} \right)^{(d-m)/2} \leq \beta^{m/2} \exp \left(\frac{(1-\beta) m}{2} \right),
\end{align}
using $1+x \leq \exp(x)$. We find $\beta$ such that $\beta^{m/2} \exp\left( \frac{(1-\beta)m}{2} \right) \leq \delta$, or equivalently,
$\beta \exp\left( 1-\beta \right) \leq \delta^{2/m}$.
It is easy to see that when $\beta = \frac{1}{e} \delta^{2/m}$, the inequality holds. Note however that $\frac{1}{e} \delta^{2/m}$ does not converge to $1$ as $m \rightarrow \infty$. We therefore need to derive a tighter bound for this regime. Using the inequality $\beta \exp(1-\beta) \leq 1 - \frac{1}{2} (1-\beta)^2$ for $0 \leq \beta \leq 1$, it follows that the inequality $\beta \exp( 1 - \beta ) \leq \delta^{2/m}$ holds for $\beta = 1 - \sqrt{2(1-\delta^{2/m})}$. In this case, we have $1 - \sqrt{2(1-\delta^{2/m})} \rightarrow 1$, as $m \rightarrow \infty$. We take our lower bound to be the max of both derived bounds (the latter is more appropriate for large $m$, whereas the former is tighter for small $m$).

For $\beta_2$, note that the requirement $\beta \exp(1-\beta) \leq \delta^{2/m}$ is equivalent to $-\ln(\beta) + (\beta-1) \geq \frac{2}{m} \ln(1/\delta)$. By setting $\beta = \beta_2(\delta, m)$, this condition is equivalent to $2 \sqrt{\frac{\ln(1/\delta)}{m}} - \ln(\beta_2(\delta,m)) \geq 0$, or equivalently, $2 z - \ln(1+2z+2z^2) \geq 0$, with $z = \sqrt{\frac{\ln(1/\delta)}{m}}$. The function $z \mapsto 2 z - \ln(1+2z+2z^2) \geq 0$ is positive on $\mathbb{R}^+$. Hence, $\beta_2(\delta,m)$ satisfies $\beta \exp( 1 - \beta ) \leq \delta^{2/m}$, which concludes the proof.
% Using Lemma \ref{lem:jl_l}, we have for any
\end{proof}

We now prove our main theorem that we recall as follows:
\begin{reptheorem}{th:linear}
Let $\S$ be a random $m$-dimensional subspace of $\mathbb{R}^d$. The following inequalities hold between the norms of semi-random perturbation  $\r_{\S}^*$ and the worst-case perturbation $\r^*$. Let $\gaone = \frac{1}{\beta_2(m, \delta)}$, and $\gatwo = \frac{1}{\beta_1(m, \delta)}$.
\begin{equation}
\gaone \frac{d}{m} \|\r^*\|_2^2 \leq
\|\r_\S^*\|_2^2\leq
\gatwo \frac{d}{m} \|\r^*\|_2^2,
\end{equation}
with probability exceeding $1 - 2 (L+1) \delta$.
% \label{lem:linear}
\end{reptheorem}

\begin{proof}
For the linear case, $\r^*$ and $\r_{\S}^*$ can be computed in closed form. We recall that, for any subspace $\S$, we have
\begin{equation}
\r_\S^k=\frac{\left|f_{k}(\x_0)-f_{\lab(\x_0)}(\x_0)\right|}{\|\P_\S\w_k-\P_\S\w_{\lab(\x_0)}\|_2^2}(\P_\S\w_{k}-\P_\S\w_{\lab(\x_0)}),
\end{equation}
where $\r_\S^k$ was defined in Eq. (\ref{eq:r_s_k}). In particular, when $\S = \mathbb{R}^d$, we have
\begin{equation}
\r^k=\frac{\left|f_{k}(\x_0)-f_{\lab(\x_0)}(\x_0)\right|}{\|\w_k-\w_{\lab(\x_0)}\|_2^2}(\w_{k}-\w_{\lab(\x_0)}).
\end{equation}
Let $k \neq \lab(\x_0)$. Define, for the sake of readability
\begin{equation*}
\begin{split}
&f^k=\left|f_{k}(\x_0)-f_{\lab(\x_0)}(\x_0)\right|,\\
&\z^k=\w_k-\w_{\lab(\x_0)}.
\end{split}
\end{equation*}
Note that
\begin{align}
\frac{\| \r^k \|_2^2}{\| \r_{\S}^k \|_2^2} = \frac{\| \P_{\S} \z^k \|_2^2}{\| \z^k \|_2^2}.
\end{align}
The projection of a fixed vector in $\mathbb{S}^{d-1}$ onto a random $m$ dimensional subspace is equivalent (up to a unitary transformation $\mathbf{U}$) to the projection of a random vector uniformly sampled from $\mathbb{S}^{d-1}$ into a fixed subspace. Let $\P_m$ be the projection onto the first $m$ coordinates. We have
\begin{align}
\| \P_{\S} \z^k \|_2^2 = \| \mathbf{U}^T \P_m \mathbf{U} \z^k \|_2^2 = \| \P_m \mathbf{U} \z^k \|_2,
\end{align}
Hence, we have
\begin{align}
\frac{\| \P_{\S} \z^k \|_2^2}{\| \z^k \|_2^2} = \| \P_m \y \|_2^2,
\end{align}
where $\y$ is a random vector distributed uniformly in the unit sphere $\mathbb{S}^{d-1}$. We apply Lemma \ref{thm:jl_ours}, and obtain
\begin{align}
\Pbb \left( \beta_1(m, \delta) \frac{m}{d} \leq \| \P_m \y \|_2^2 \leq \beta_2(m, \delta) \frac{m}{d} \right) \geq 1 - 2 \delta.
\end{align}
Hence,
\begin{align}
\Pbb \left\{ \frac{1}{\beta_2(m, \delta)} \frac{d}{m} \leq \frac{\| \r_{\S}^k \|_2^2 }{ \| \r^k \|_2^2} \leq \frac{1}{\beta_1(m, \delta)} \frac{d}{m} \right\} \geq 1 - 2 \delta.
\end{align}
Using the multi-class extension in Lemma \ref{lem:single_class_multiclass_linear}, we conclude that
\begin{align}
\Pbb \left\{ \gaone \frac{d}{m} \leq \frac{\| \r_{\S}^* \|_2^2 }{ \| \r^* \|_2^2} \leq \gatwo \frac{d}{m} \right\} \geq 1 - 2 (L + 1) \delta.
\end{align}
\end{proof}

\begin{lemma}[Binary case to multiclass]
% Define
%\begin{align}
%\r_\S^k & = \argmin_{\r \in \S} \| \r \|_2 \text{ s.t. } f_{k} (\x_0+\r) \geq f_{\lab(\x_0)} (\x_0+\r), \\
%\r^k & = \argmin_{\r} \| \r \|_2 \text{ s.t. } f_{k} (\x_0+\r) \geq f_{\lab(\x_0)} (\x_0+\r).
%\end{align}
Assume that, for all $k \in \{1, \dots, \numClass]\} \backslash \{ \lab(\x_0) \}$
\begin{align}
\Pbb\left( l \leq \frac{\| \r_\S^k \|_2}{\| \r^k \|_2} \leq u \right) \geq 1 - \delta.
\end{align}
Then, we have
\begin{align}
\Pbb\left( l \leq \frac{\| \r_\S^* \|_2}{\| \r^* \|_2} \leq u \right) \geq 1 - (\numClass + 1) \delta.
\end{align}
\label{lem:single_class_multiclass_linear}
\end{lemma}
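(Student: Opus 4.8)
The plan is to reduce the two global robustness quantities to pointwise minima over classes and then close the gap with a union bound. First I would record the key structural identity that follows directly from the definitions in Eq.~(\ref{eq:second_form_robustness}) and~(\ref{eq:r_s_k}): since the label of $\x_0+\r$ differs from $\lab$ exactly when $f_k(\x_0+\r) \geq f_{\lab}(\x_0+\r)$ for some $k \neq \lab$, the global minimal perturbations are the shortest of the pairwise ones,
\begin{align*}
\|\r_\S^*\|_2 = \min_{k \neq \lab} \|\r_\S^k\|_2, \qquad \|\r^*\|_2 = \min_{k \neq \lab} \|\r^k\|_2.
\end{align*}

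Next I would define, for every class $k \neq \lab$, the good event $E_k = \{ l \leq \|\r_\S^k\|_2 / \|\r^k\|_2 \leq u \}$, which holds with probability at least $1-\delta$ by hypothesis, and take the intersection $E = \bigcap_{k \neq \lab} E_k$. A union bound over the $\numClass-1$ classes distinct from $\lab$ gives $\Pbb(E) \geq 1 - (\numClass-1)\delta \geq 1-(\numClass+1)\delta$, so the stated probability is comfortably met.

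The heart of the argument is to verify that $E$ forces the two-sided bound on the \emph{global} ratio. For the upper bound I would evaluate at $k^\star = \argmin_{k \neq \lab}\|\r^k\|_2$: since $\|\r_\S^*\|_2 \leq \|\r_\S^{k^\star}\|_2 \leq u\,\|\r^{k^\star}\|_2 = u\,\|\r^*\|_2$ on $E_{k^\star}$, the upper inequality follows. For the lower bound I would instead evaluate at $j^\star = \argmin_{k \neq \lab}\|\r_\S^k\|_2$: since $\|\r_\S^*\|_2 = \|\r_\S^{j^\star}\|_2 \geq l\,\|\r^{j^\star}\|_2 \geq l\,\|\r^*\|_2$ on $E_{j^\star}$, the lower inequality follows. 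Both minimizers lie in the index set over which $E$ is taken, so both inequalities hold simultaneously on $E$.

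The one point that genuinely requires care — and the reason a single application of the binary bound is not enough — is that $k^\star$ and $j^\star$ are in general different classes, and both are random since they depend on the draw of $\S$. Hence I cannot commit in advance to one class on which to invoke the hypothesis; I must instead ensure the sandwich $l\,\|\r^k\|_2 \leq \|\r_\S^k\|_2 \leq u\,\|\r^k\|_2$ holds for \emph{all} $k$ at once, which is exactly what intersecting the $E_k$ provides. This is the main (and only) obstacle, and it is handled cleanly by the union bound above.
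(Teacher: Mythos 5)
Your proof is correct and follows essentially the same route as the paper: decompose $\|\r^*_\S\|_2$ and $\|\r^*\|_2$ as minima of the pairwise quantities over $k\neq\lab$, then apply a union bound; your version of intersecting all good events up front even yields the slightly sharper failure probability $(\numClass-1)\delta$ rather than $(\numClass+1)\delta$ (the paper instead treats the two tails separately, exploiting that $p=\argmin_k\|\r^k\|_2$ is deterministic so the upper tail costs only $\delta$). One small imprecision: your $k^\star=\argmin_{k\neq\lab}\|\r^k\|_2$ does not depend on $\S$ and is therefore deterministic, unlike $j^\star$; this does not affect the validity of your argument.
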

\begin{proof}
% \red{TO CHANGE HERE TO REPLACE 2L by L+1}
Let $p := \arg\min_{i} \| \r^i \|_2$. Note that we have $\Pbb\left( \frac{\| \r_{\S}^* \|_2 }{ \| \r^* \|_2} \geq u \right) \leq \Pbb \left( \frac{\| \r_{\S}^p \|_2 }{ \| \r^p \|_2} \geq u \right) \leq \delta$. Moreover, we use a union bound to bound the the other bad event probability:
\begin{align}
\Pbb\left( \frac{\| \r_{\S}^* \|_2 }{ \| \r^* \|_2 } \leq l \right) & \leq \Pbb\left( \bigcup_{k}  \left\{ \frac{\| \r_{\S}^k \|_2 }{ \| \r^k \|_2 } \leq l \right\} \right) \leq \numClass \delta, \\
% \Pbb\left( \frac{\| \r_{\S}^* \|_2 }{ \| \r^* \|_2 } \geq u \right) & \leq \Pbb\left( \bigcup_{k}  \left\{ \frac{\| \r_{\S}^k \|_2 }{ \| \r^k \|_2 } \geq u \right\} \right) \leq \numClass \delta.
\end{align}
We conclude by using the fact that
\begin{align}
\Pbb\left( l \leq \frac{\| \r_\S^* \|_2}{\| \r^* \|_2} \leq u \right)= 1 - \Pbb\left( \frac{\| \r_{\S}^* \|_2 }{ \| \r^* \|_2 } \leq l \right) - \Pbb\left( \frac{\| \r_{\S}^* \|_2 }{ \| \r^* \|_2 } \geq u \right).
\end{align}
\end{proof}

\subsection{Proof of Theorem \ref{thm:mainThm_constantCurvature_binary} and Corollary \ref{corr:nonlinear} (nonlinear classifiers)}

First, we present an important geometric lemma and then use it to bound $\|\r^*_\S\|_2$. For the sake of the general readability of the section, some auxiliary results are given in Section \ref{subsec:useful}.

In the following result, we show that, when the curvature of a planar curve is constant and sufficiently small, the distance between a point $\x$ and the curve at a specific direction $\theta$ is well approximated by the distance between $\x$ and a straight line (see Fig.~\ref{fig:bounding_2d} for an illustration).
\begin{lemma}
\label{lem:geometric}
% \red{check this lemma.}
% \red{I don't think the first assumption is needed for the lower bound.}
Let $\gamma$ be a planar curve of constant curvature $\kappa$. We denote by $r$ the distance between a point $\x$ and the curve $\gamma$. Denote moreover by $\mathcal{T}$ the tangent to $\gamma$ at the closest point to $\x$ (see Fig.~\ref{fig:bounding_2d}). Let $\theta$ be the angle between $\u$ and $\boldsymbol{v}$ as depicted in Fig.~\ref{fig:bounding_2d}. We assume that $r \kappa < 1$. We have
\begin{equation}
-C_1r\kappa\tan^2(\theta)
\leq
\frac{\|\x_\gamma-\x\|_2}{\|\u\|_2}-1
\end{equation}
Moreover, if
% Let $r$ be the distance between a point $\x$ and a curve $\gamma$ and  $\mathcal{T}$ is tangent to $\gamma$ at the closest point to $\x$.  Also, let curve $\gamma$ have some constant curvature $\kappa$.
% Let $\theta$ be the angle between $\u$ and $\boldsymbol{v}$ as depicted in Figure \ref{fig:subadv_2d}. If
$$\tan^2(\theta)\leq \frac{0.2}{r\kappa},$$
then, the following upper bound holds
\begin{equation}
\label{eq:upper_bound_lemma}
\frac{\|\x_\gamma-\x\|_2}{\|\u\|_2}-1
\leq
C_2r\kappa\tan^2(\theta).
\end{equation}
We can set $C_1=0.625$ and $C_2=2.25$.
\end{lemma}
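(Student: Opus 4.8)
The plan is to use the fact that a planar curve of constant curvature $\kappa$ is a circular arc of radius $R = 1/\kappa$, which reduces the statement to an explicit computation in the plane. I would place the closest point of $\gamma$ to $\x$ at the origin, take the tangent $\mathcal{T}$ to be the horizontal axis and the inward normal $\boldsymbol v$ to be vertical, so that $\x = (0,r)$ and the circle is tangent to $\mathcal{T}$ at the origin with center $(0,\sigma R)$, where $\sigma = +1$ if $\gamma$ bends toward $\x$ and $\sigma = -1$ if it bends away. Writing the travel direction as the unit vector making angle $\theta$ with $\boldsymbol v$, the point $\x_\gamma$ is the first intersection of the ray from $\x$ with the circle, while $\|\u\|_2 = r/\cos\theta$ is the distance from $\x$ to $\mathcal{T}$ along the same direction (the length of $\u$ in Fig.~\ref{fig:bounding_2d}). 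Substituting the ray into the circle equation $x^2 + y^2 - 2\sigma R y = 0$ yields a quadratic in the ray parameter $t$, whose relevant (smallest positive) root is $t = (r-\sigma R)\cos\theta + \sigma\sqrt{D}$, giving $\|\x_\gamma - \x\|_2 = t$.

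Writing $\beta = r\kappa = r/R \in (0,1)$ and $s = \cos\theta$, the ratio $\rho := \|\x_\gamma - \x\|_2/\|\u\|_2 = ts/r$ becomes $\rho = \big[(\beta-\sigma)s^2 + \sigma s\sqrt{A}\big]/\beta$, with $A = (1 - 2\sigma\beta + \beta^2)s^2 + 2\sigma\beta - \beta^2$. The crucial step is to rationalize $\rho - 1$: setting $B = \beta\sin^2\theta + \sigma\cos^2\theta$, one checks that $\rho - 1 = (\sigma s\sqrt A - B)/\beta$, and multiplying by the conjugate gives
\[
\rho - 1 = \frac{s^2 A - B^2}{\beta(\sigma s\sqrt A + B)} = \frac{-\beta\sin^2\theta}{\sigma\cos\theta\sqrt A + B},
\]
where the identity $s^2 A - B^2 = -\beta^2\sin^2\theta$ follows from a short expansion in which all terms linear in $\sigma$ cancel. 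This closed form already confirms the qualitative picture: $\rho - 1$ vanishes at $\theta = 0$, is negative when $\gamma$ bends toward $\x$ ($\sigma = +1$), positive when it bends away ($\sigma = -1$), and is of order $\beta\sin^2\theta$.

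With the identity in hand, both inequalities reduce to bounding a single quotient. For the lower bound the binding case is $\sigma = +1$, where $\rho - 1 < 0$ (for $\sigma = -1$ the lower bound is automatic); dividing through by $-\beta\sin^2\theta$ reduces the claim $\rho - 1 \ge -C_1 r\kappa\tan^2\theta$ to showing $\cos^2\theta \le C_1(\cos\theta\sqrt A + B)$. Here the starting estimates $A \ge (1-\beta)^2 s^2$ and $B \ge s^2$ already give a lower bound on the denominator that I would sharpen to secure the constant $C_1 = 0.625$ uniformly over $\beta\in(0,1)$ and $\theta$. Symmetrically, for the upper bound the binding case is $\sigma = -1$, and the claim $\rho - 1 \le C_2 r\kappa\tan^2\theta$ reduces to $\cos^2\theta \le C_2(\cos\theta\sqrt A - B)$, where now $B = \beta\sin^2\theta - \cos^2\theta < 0$.

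The main obstacle lies in this last (upper) bound. As $\theta$ grows toward the tangency angle $\theta^*$ with $\tan^2\theta^* = 1/\big(\beta(2+\beta)\big)$, one has $A \to 0$ and the denominator $\cos\theta\sqrt A - B$ shrinks, so the quotient blows up; this is precisely why the hypothesis $\tan^2\theta \le 0.2/(r\kappa)$ is imposed. I would use this hypothesis to keep $\theta$ bounded away from $\theta^*$ (note $0.2(2+\beta)\le 1$ for $\beta<1$, so the admissible range stays strictly below tangency), thereby lower-bounding $A$ and hence $\cos\theta\sqrt A - B$, and securing $C_2 = 2.25$. Checking that the chosen constants $C_1$ and $C_2$ work uniformly over the admissible range of $(\beta,\theta)$ — rather than merely to leading order in $\theta$ — is the routine but delicate part of the argument.
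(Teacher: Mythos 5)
Your reduction is the same as the paper's: a constant-curvature planar curve is a circular arc, your two cases $\sigma=\pm1$ are exactly the paper's convex/concave cases, and your quadratic in the ray parameter is the same second-order equation the paper solves for $r'=\|\x_\gamma-\x\|_2$. Where you genuinely diverge is in how the root is estimated. The paper expands the square root via the elementary inequalities $\sqrt{1-x}\ge 1-\frac{x}{2}-\frac{x^2}{4}$ and $\sqrt{1+x}\ge 1+\frac{x}{2}-\frac{x^2}{8}$ (its Lemmas \ref{lem:ineq} and \ref{lem:ineq2}) and then carries several lines of trigonometric bookkeeping to collect the $\tan^2(\theta)\frac{r}{R}$ terms. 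You instead rationalize: your identity $s^2A-B^2=-\beta^2\sin^2\theta$ checks out, so the exact closed form $\rho-1=-\beta\sin^2\theta/(\sigma\cos\theta\sqrt{A}+B)$ is correct, it makes the sign of $\rho-1$ in each case transparent, and it reduces both inequalities to lower-bounding a single denominator. This is cleaner and in fact yields sharper constants than the paper's. The one step you defer as ``routine but delicate'' does close, and easily: for $\sigma=+1$ your stated starting estimate $A\ge(1-\beta)^2s^2$ alone is \emph{not} enough (it only gives $C_1\ge 1/(2-\beta)$, which degenerates as $\beta\to1$), but the one-line sharpening $A=(1-\beta)^2s^2+\beta(2-\beta)\ge\bigl((1-\beta)^2+\beta(2-\beta)\bigr)s^2=s^2$ combined with $B\ge s^2$ gives denominator $\ge 2\cos^2\theta$, hence $C_1=1/2\le 0.625$; for $\sigma=-1$ the hypothesis $\beta\tan^2\theta\le 0.2$ gives $-B=\cos^2\theta-\beta\sin^2\theta\ge 0.8\cos^2\theta$, hence $C_2=1.25\le 2.25$ without even using the $\cos\theta\sqrt{A}$ term, and the same hypothesis keeps $\theta$ strictly below the tangency angle so that $A\ge0$ and the intersection exists, as you note. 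So the argument is sound and complete once those two estimates are written out.
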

\begin{figure}
\center
\includegraphics[scale=1]{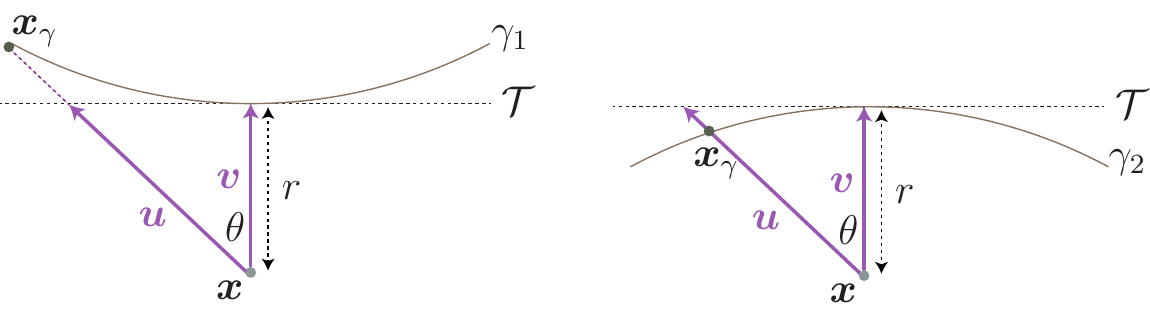}
\caption{Bounding $\|\x_\gamma-\x\|_2$ in terms of $\kappa$.}
\label{fig:bounding_2d}
\end{figure}
\begin{proof}[Proof of upper bound]
We consider two distinct cases for the curve $\gamma$. In the case where $\gamma$ is concave-shaped (Fig. \ref{fig:bounding_2d}, right figure), we have $$\frac{\| \x_{\gamma} - \x \|_2}{\| \u \|_2} \leq 1,$$ and the upper bound in Eq. (\ref{eq:upper_bound_lemma}) directly holds. We therefore focus on the case where $\gamma$ is convex-shaped as illustrated in the left figure of Fig. \ref{fig:bounding_2d}.
% For the upper bound, the worst case is when $\gamma$ coincides with $\gamma_1$.
Define $R:=\nicefrac{1}{\kappa}$, one can write using simple geometric inspection
\begin{equation}
R^2=\sin({\theta})r'^2+(R+r-r'\cos({\theta}))^2,
\end{equation}
where $r'=\|\x_{\gamma}-\x\|_2$. The discriminant of the second order equation (with variable $r'$) is equal to 
\begin{align*}
\Delta = 4 \left( (R+r)^2 \cos^2(\theta) - (2 r R + r^2) \right).
\end{align*}
We have $\Delta \geq 0$ as $\theta$ satisfies the two assumptions $\tan^2 (\theta) \leq 0.2 R / r$ and $r / R < 1$. The smallest solution of this second order equation is given as follows % Solving this second order equation, accordingly to the geometry of the problem, gives
\begin{equation}
r'=(R+r)\cos(\theta)-\sqrt{(R+r)^2\cos^2(\theta)-2Rr-r^2}.
\end{equation}
Using some simple algebraic manipulations, we obtain
\begin{equation}
r'= \frac{r}{\cos({\theta})}\left(\left(\frac{R}{r}+1\right)\cos^2({\theta})-\frac{R}{r}\cos^2({\theta})\sqrt{1-\tan^2({\theta})\frac{2Rr+r^2}{R^2}}\right).
\end{equation}
Using the inequality in Lemma $\ref{lem:ineq}$ together with the two assumptions, we get
\begin{equation}
\begin{split}
r'\leq\frac{r}{\cos({\theta})}
	\Bigg(
		\cos^2({\theta})&+\frac{R}{r}\cos^2({\theta})\tan^2({\theta})\left(\frac{2Rr+r^2}{2R^2}\right)
		\\&+\frac{R}{r}\cos^2({\theta})\tan^4({\theta})\left(\frac{2Rr+r^2}{2R^2}\right)^2
	\Bigg).
\end{split}
\end{equation}
With simple trigonometric identities, the above expression can be simplified to
\begin{equation}
r'\leq\frac{r}{\cos({\theta})}
	\left(
		1+\frac{r}{R}\left(\frac{\sin^2({\theta})}{2}+\frac{\sin^4({\theta})}{\cos^2({\theta})}\left(1+\frac{r}{2R}\right)^2\right)
	\right).
\end{equation}
We expand this quantity, and obtain
\begin{equation}
r'\leq\frac{r}{\cos({\theta})}
	\left(
		1+\left(\frac{\sin^2({\theta})}{2}+\frac{\sin^4({\theta})}{\cos^2({\theta})}\right)\frac{r}{R}
		+\frac{\sin^4({\theta})}{\cos^2({\theta})}\frac{r^2}{R^2}
		+\frac{\sin^4({\theta})}{4\cos^2({\theta})}\frac{r^3}{R^3}
	\right).
\end{equation}
Since $\sin^2({\theta})\tan^2({\theta})=\tan^2({\theta})-\sin^2({\theta})$, we have
\begin{equation}
r'\leq\frac{r}{\cos({\theta})}
	\left(
		1+\tan^2({\theta})\left(\frac{r}{R}
		+\frac{r^2}{R^2}
		+\frac{r^3}{4R^3}\right)
	\right).
\end{equation}
According to the assumptions $r/R<1$, therefore
\begin{equation}
r'\leq\frac{r}{\cos({\theta})}
	\left(
		1+2.25\tan^2({\theta})\frac{r}{R}
	\right).
\end{equation}
Since $r/\cos(\theta)=\|\u\|_2$, one can finally conclude on the upper bound
\begin{equation}
\frac{\|\x_\gamma-\x\|_2}{\|\u\|_2}-1
\leq 2.25r\kappa\tan^2({\theta}).
\end{equation}
\end{proof}

\begin{proof}[Proof of lower bound]
When the curve is convex shaped (Fig. \ref{fig:bounding_2d} left), we have $\| \x_{\gamma} - \x \|_2 \geq \| \u \|_2$, and the desired lower bound holds. We focus therefore on the case where $\gamma$ has a concave shape, and coincides with with $\gamma_2$ (see Fig. \ref{fig:bounding_2d} right).
The following equation holds using simple geometric arguments
\begin{equation}
R^2=\sin({\theta})r'^2+(R-r+r'\cos({\theta}))^2.
\end{equation}
where $r' = \| \x_{\gamma} - \x \|_2$.
Solving this second order equation gives
\begin{equation}
r' = - (R - r) \cos(\theta) + \sqrt{(R-r)^2 \cos^2(\theta) - r^2 + 2 R r}.
\label{eq:lowerbound_secondOrder}
\end{equation}
After some algebraic manipulations, we get
\begin{equation}
r'= \frac{r}{\cos({\theta})}\left(-\left(\frac{R}{r}-1\right)\cos^2({\theta})+\frac{R}{r}\cos^2({\theta})\sqrt{1+\tan^2({\theta})\frac{2Rr-r^2}{R^2}}\right).
\end{equation}
Using the inequality in Lemma \ref{lem:ineq2}, together with the fact that $r \kappa < 1$, we obtain
\begin{equation}
\begin{split}
r'\geq\frac{r}{\cos({\theta})}
	\Bigg(
		\cos^2({\theta})&+\frac{R}{r}\cos^2({\theta})\tan^2({\theta})\left(\frac{2Rr-r^2}{2R^2}\right)
		\\&-\frac{R}{r}\frac{\cos^2({\theta})\tan^4({\theta})}{2}\left(\frac{2Rr-r^2}{2R^2}\right)^2
	\Bigg).
\end{split}
\end{equation}
Using simple trigonometric identities, the above expression is simplified to
\begin{equation}
r'\geq\frac{r}{\cos({\theta})}
	\left(
		1+\frac{r}{R}\left(-\frac{\sin^2({\theta})}{2}-\frac{\sin^4({\theta})}{2\cos^2({\theta})}\left(1-\frac{r}{2R}\right)^2\right)
	\right).
\end{equation}
When expanding it, we obtain
\begin{equation}
r'\geq\frac{r}{\cos({\theta})}
	\left(
		1-\left(\frac{\sin^2({\theta})}{2}+\frac{\sin^4({\theta})}{2\cos^2({\theta})}\right)\frac{r}{R}
		+\frac{\sin^4({\theta})}{2\cos^2({\theta})}\frac{r^2}{R^2}
		-\frac{\sin^4({\theta})}{8\cos^2({\theta})}\frac{r^3}{R^3}
	\right).
\end{equation}
Since $\sin^2({\theta})\tan^2({\theta})=\tan^2({\theta})-\sin^2({\theta})$, we have
\begin{equation}
r'\geq\frac{r}{\cos({\theta})}
	\left(
		1-\tan^2({\theta})\left(\frac{r}{2R}
		+\frac{r^3}{8R^3}\right)
	\right).
\end{equation}
Using again the assumption $r/R<1$, we obtain
\begin{equation}
r'\geq\frac{r}{\cos({\theta})}
	\left(
		1-0.625\tan^2({\theta})\frac{r}{R}
	\right).
\end{equation}
Since $r/\cos(\theta)=\|\u\|_2$, one can rewrite it as
\begin{equation}
\frac{\|\x_\gamma-\x\|_2}{\|\u\|_2}-1
\geq -0.625r\kappa\tan^2({\theta}),
\end{equation}
which completes the proof.
\end{proof}

We now use the previous lemma to bound the semi-random robustness of the classifier, i.e. $\| \r_{\S}^k \|_2$, to the worst-case robustness $\| \r^k \|_2$ in the case where the curvature is sufficiently small.

\begin{reptheorem}{thm:mainThm_constantCurvature_binary}
Let $\S$ be a random $m$-dimensional subspace of $\mathbb{R}^d$. Define $\alpha:=\sqrt{\nicefrac{m}{d}}$, and let $\kappa := \kappa(\bo_{k})$. Assuming that
$\kappa \leq \frac{C \alpha^2}{\gatwo \|\r^k\|_2}$, the following inequalities hold between $\| \r_\S^k \|_2$ and the worst-case perturbation $\| \r^k \|_2$
\begin{equation}
    \frac{\gaone}{\alpha^2} \left(1- \frac{C_1 \| \r^k \|_2 \kappa \gatwo}{\alpha^2} \right)^2
\leq\frac{\|\r_\S^k \|_2^2}{\|\r^k\|_2^2}
\leq \frac{\gatwo}{\alpha^2}\left(1+\frac{C_2 \|\r^k\|_2\kappa \gatwo}{\alpha^2}\right)^2
\label{eq:mainThm_eq}
\end{equation}
with probability larger than $1- 4 \delta$. The constants can be taken $C = 0.2, C_1 = 0.625, C_2 = 2.25$. % \red{I think it should be 1 - 2$\delta$.}
\end{reptheorem}

%\begin{itemize}[leftmargin=*]
%\item $\sqrt{\frac{2\ln n}{m}}\leq\epsilon\leq\frac{1}{2}$,
%\item $\kappa\leq\frac{\alpha^2}{15\|\r^*\|_2}$,
%\end{itemize}
%the following inequalities hold between the norms of semi-random perturbation  $\r_S^*$ and the worst-case perturbation $\r^*$, w.p.g. $1-\frac{2}{n}$.
%\begin{equation}
%\min\left\{\frac{1-2\epsilon}{\alpha}\left(1-\frac{C_1}{2}\right),\frac{3}{8}\sqrt{\frac{2}{\kappa\|\r^*\|_2}}\left(1-3\|\r^*\|_2\kappa\right)\right\}
%\leq\frac{\|\r_\S^* \|_2}{\|\r^*\|_2}
%\leq \frac{1+6\epsilon}{\alpha}\left(1+\frac{C_2\|\r^*\|_2\kappa}{\alpha^2}\right)
%%+O(\epsilon)\|\r^*\|_2\kappa
%\end{equation}
%\label{prop:bounds}
%\end{proposition}
\begin{figure}
\center
\subfigure[]{
\includegraphics[scale=1]{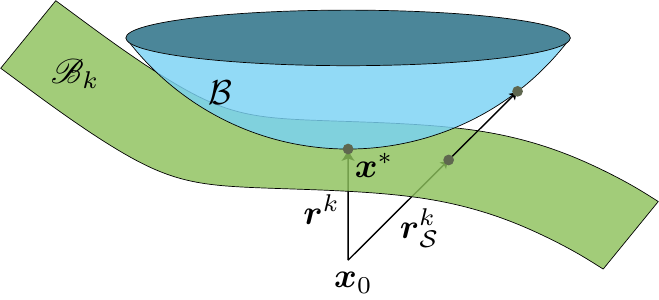}
}
\subfigure[]{
\includegraphics[scale=1]{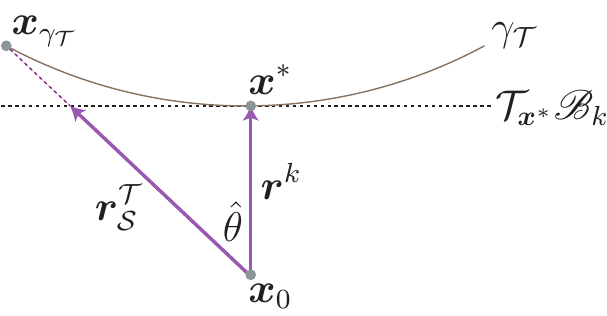}
}
% \caption{$\Gamma_\mathcal{T}$ cross-section.}
\caption{Left: To prove the upper bound, we consider a ball $\mathcal{B}$ included in $\mathcal{R}_k$ that intersects with the boundary at $\x^*$. Upper bounds on $\| \r_{\S}^k \|_2$ derived when the boundary is $\partial \mathcal{B}$ are also valid upper bounds for the real boundary $\bo_k$. Right: Normal section to the decision boundary $\bo_k = \partial \mathcal{B}$ along the normal plane $\mathcal{U} = \text{span} \left( \r_{\S}^{\T}, \r^k\right)$. We denote by $\gamma$ the normal section of boundary $\bo_k$, along the plane $\mathcal{U}$, and by $\T_{\x^*} \bo_k$ the tangent space to the sphere $\partial \mathcal{B}$ at $\x^*$.} % By definition, the curve $\gamma$ lies \textit{below} the curve $\gamma_{\T}$. \red{to change this figure}}
\label{fig:subadv_2d}
\end{figure}
% \red{maybe here a figure showing the normal section taken from above...}

%  we consider the ball $\textfrak{B}$ of maximal radius $R := \textswab{r}_{\infdivx{k}{\lab(\x_0)}} (\x^*)$ 

\begin{proof}[Proof of upper bound]

Denote by $\x^*$ the point belonging to the boundary $\bo_k$ that is closest to the original data point $\x_0$. % To prove the upper bound, let $\mathcal{B}$ 
By definition of the curvature $\kappa$ (see Eq. \ref{eq:def_RBij}), there exists a point $\boldsymbol{z}^*$  such that the ball $\mathcal{B}$ centered at $\z^*$ and of radius $1/\kappa = \| \z^* - \x^* \|_2$ is inscribed in the region $\mathcal{R}_k = \{ x \in \R^d: f_k (\x) > f_{\lab(\x_0)} (\x) \}$ (see Fig. \ref{fig:subadv_2d} (a)).\footnote{For a fixed point $\x^*$ on the boundary, the maximal radius $1/\kappa$ might not be achieved. To prove the result in the general case where the supremum is not achieved, one can consider instead a sequence $(\kappa_n)_n$ converging to $\kappa$, such that the balls of radius $1/\kappa_n$ and intersecting the boundary at $\x^*$ are included in $\mathcal{R}_k$. The same proof and results follow by taking the limit on the bounds derived with ball of radius $1/\kappa_n$.}
% The existence of such a ball follows directly from the definition of the decision boundary curvature $\kappa$ (Eq. (\ref{eq:})). 
% Note that the sphere $\partial \mathcal{B}$ of radius $1/\kappa$ is necessarily tangent to the boundary $\bo_k$ at $\x^*$ (see Fact \ref{fact:semirandom_tangent_ball}). % We assume 
% be an open ball of center $\z^*$, and of radius $\nicefrac{1}{\kappa} = \| \z^* - \x^* \|_2$ that is inscribed in the region $\mathcal{R}_k = \{ x \in \R^d: f_k (\x) > f_{\lab(\x_0)} (\x) \}$  
% We recall that $\kappa$ is the curvature of the decision boundary $\bo_k$, and is equal to the inverse of the radius of the largest open ball one can inscribe in the region $\mathcal{R}_k$.

Observe that the worst-case perturbation along any subspace $\S$ that reaches the ball $\mathcal{B}$ is larger than the perturbation along $\S$ that reaches the region $\mathcal{R}_k$, as $\mathcal{B} \subseteq \mathcal{R}_k$. Therefore, any upper bound derived when the boundary is the sphere of radius $1/\kappa$; i.e., $\bo_k = \partial \mathcal{B}$ is also a valid upper bound for boundary $\bo_k$ (see Fig. \ref{fig:subadv_2d} (a)). It is therefore sufficient to derive an upper bound in the worst case scenario where the boundary $\bo_k = \partial \mathcal{B}$, and we consider this case for the remainder of the proof of the upper bound. % We therefore assume in the remaining of this proof that $\bo_k = \partial \mathcal{B}$.

% We consider the case where the boundary $\bo_k$ coincides with the sphere of radius $R$, $\partial \mathcal{B}$. In fact, 

% \red{What is adherence} Formally, we have % have for any subspace $\S$, 

%\begin{align*}
%\| \r_\S^k \|_2 = \min_{\r \in \S} \| \r \|_2 \text{ s.t. } f_k(\x_0 + \r) \geq f_{\lab(\x_0)} (\x_0 + \r) \leq \min_{\r \in \S} \| \r \|_2 \text{ s.t. } \x_0 + \r \in \mathcal{\overline{B}}.
%\end{align*}

% Since the ball $\mathcal{B}$ is included in the region $\mathcal{R}_k$, it suffices to consider the case where worst case where 
% It should be noted that any upper bound on the quantity of interest $\nicefrac{\| \r_\S^k \|_2}{\| \r^k \|_2}$ 

We now consider the linear classifier whose boundary is tangent to $\bo_k$ at $\x^*$. For the random subspace $\S$, we denote by $\r_{\S}^{\T}$ the worst-case subspace perturbation for this linear classifier. We then focus on the intersection between the boundary $\bo_k$ and the two-dimensional plane $\mathcal{U}$ spanned by the vectors $\r^k$ and $\r_{\S}^\T$. This \textit{normal} section of the boundary cuts the ball $\mathcal{B}$ through its center as the tangent spaces of the decision boundary and the ball coincide. See Fig.~\ref{fig:subadv_2d} for a clarifying figure of this two-dimensional cross-section. We define the angle $\hat{\theta}$ as denoted in Fig.~\ref{fig:subadv_2d}, such that $\cos(\hat{\theta}) = \frac{\| \r^k \|_2}{\| \r^{\mathcal{T}}_{\S} \|_2}$.

We apply our result on linear classifiers in Theorem \ref{th:linear} for the tangent classifier. We have
\begin{align}
\label{eq:b40}
\frac{1}{\cos(\hat{\theta})^2} = \frac{\| \r_\S^\T \|_2^2}{\| \r^k \|_2^2} \leq \frac{1}{\alpha^2} \gatwo,
\end{align}
with probability exceeding $1-2 \delta$. Hence, using $\tan^2(\hat{\theta}) \leq (\cos^2(\hat{\theta}))^{-1}$ and the assumption of the theorem, we deduce that $$\tan^2(\hat{\theta}) \leq \frac{1}{\alpha^2} \gatwo \leq \frac{0.2}{\kappa \| \r^k \|_2},$$ with probability exceeding $1-2 \delta$. Note moreover that $$\| \r^k \|_2 \kappa \leq \frac{0.2 \alpha^2}{\gatwo} < 1.$$
Hence, the assumptions of Lemma \ref{lem:geometric} hold with probability larger than $1-2 \delta$. % , as $\kappa \geq 1/R$ by definition of the global curvature. 
Using the notations of Fig.~\ref{fig:subadv_2d}, we therefore obtain from Lemma \ref{lem:geometric}
% $\tan^2(\hat{\theta}) \leq \frac{0.2}{\| \r^k \|_2 \kappa}$
\begin{equation}
\label{eq:b41}
\frac{\|\x_{\gamma}-\x_0\|_2}{\|\r_\S^\mathcal{T}\|_2}-1\leq C_2 \kappa \|\r^k\|_2\tan^2(\hat{\theta}) % \leq C_2\|\r^k\|_2\kappa\tan^2(\hat{\theta}).
\end{equation}
with probability larger than $1-2 \delta$. 

% Using the fact that $\r^k_\S$ is the minimal perturbation together with the inclusion of the ball $\textswab{B}$ in the region $\mathcal{R}_k$ (see Fig. \ref{fig:subadv_2d}), we have $\|\r^k_\S\|_2\leq\|\x_{\gamma_\mathcal{T}}-\x_0\|_2$. We therefore obtain
Observe that $\| \x_{\gamma} - \x_0 \|_2 \geq \| \r_{\S}^k \|_2$, and that $\tan^2(\hat{\theta}) \leq \frac{\| \r_\S^T \|_2^2}{\| \r^k \|_2^2}$. Hence, we obtain by re-writing Eq. (\ref{eq:b41})
\begin{equation}
\Pbb \left( \frac{\| \r_{\S}^k \|^2_2}{\| \r^k \|^2_2} \leq \left\{ 1 + C_2 \kappa \| \r^k \|_2 \frac{\| \r_\S^\T \|_2^2}{\| \r^k \|_2^2} \right\}^2 \frac{\| \r_\S^\T \|_2^2}{\| \r^k \|_2^2} \right) \geq 1 - 2 \delta.
\end{equation}
Using the inequality in Eq. (\ref{eq:b40}), we obtain
\begin{align*}
\Pbb \left( \frac{\| \r_{\S}^k \|_2^2}{\| \r^k \|_2^2} \leq \left\{ 1 + C_2 \kappa \| \r^k \|_2 \frac{\gatwo}{\alpha^2}  \right\}^2 \frac{\gatwo}{\alpha^2} \right) \geq 1 - 2 \delta,
\end{align*}
which concludes the proof of the upper bound.
\end{proof}

% Note that the above upper bound can be derived by taking any ball $\mathcal{B}$ included in $\mathcal{R}_k$ and tangent to $\x^*$. 
% which concludes the proof of the upper bound.
% Hence, we obtain
%\begin{equation}
%\Pbb \left( \frac{\|\r_\S^k\|_2}{\|\r_\S^\mathcal{T}\|_2}-1\leq
%C_2\frac{\|\r^k\|_2}{R}\tan^2(\hat{\theta}) \right) \geq 1-\delta.
%\end{equation}
%Using once again the result in Theorem \ref{th:linear}, we have $\Pbb \left( \tan^2(\hat{\theta}) \leq \frac{\gatwo}{\alpha^2} \right) \geq 1-\delta$,  we obtain
%\begin{align}
%\Pbb \left( \frac{\|\r_\S^k\|_2}{\|\r_\S^\mathcal{T}\|_2}-1\leq
%C_2\|\r^k\|_2 \frac{\gatwo}{R \alpha^2}  \right) \geq 1-\delta.
%\end{align}
%Finally, using the bound
%\begin{align}
%\Pbb\left( \| \r_{\S}^{\mathcal{T}} \|_2 \leq \| \r^k \|_2 \frac{\sqrt{\gatwo}}{\alpha} \right) \geq 1-\delta,
%\end{align}
%we conclude that
%\begin{align}
%\frac{\| \r_{\S}^k \|_2^2}{\| \r^k \|_2^2} \leq \frac{\gatwo}{\alpha^2} \left( 1 + \frac{C_2 \| \r^k \|_2 \kappa \gatwo}{\alpha^2} \right)^2.
%\end{align}
%with probability exceeding $1-\delta$. % \red{maybe here also no need to do union bound.}

% The worst case scenario for the lower bound is when the curves $\gamma^*$ and 
% \red{$r_{\mathcal{S}}^k$ is for the sphere or the decision boundary?} 

\begin{proof}[Proof of the lower bound]
We now consider the ball $\mathcal{B}'$ of center $\z^*$ and radius $1/\kappa = \| \z^* - \x^* \|_2$ that is included in the region $\mathcal{R}_{\lab (\x_0)}$. Since the ball $\mathcal{B}'$ is, by definition, included in the region $\mathcal{R}_{\lab(\x_0)}$, the worst-case scenario for the lower bound on $\| \r_{\S}^k \|_2$ occurs whenever the decision boundary $\bo_k$ coincides with the ball $\mathcal{B}'$ (see Fig. \ref{fig:subadv_2d_*} (a)). We consider this case in the remainder of the proof. % (denoted $\gamma^*$ in the figure) coincides with the normal section of the decision boundary $\bo_k$.

% This worst-case scenario is depicted in Fig. \ref{fig:subadv_2d_*}, and we consider in the remaining of this proof this worst-case scenario.

To derive the lower bound, we consider the cross-section $\mathcal{U}'$ spanned by the vectors $\r_{\S}^k$ and $\r^k$ (Fig.~\ref{fig:subadv_2d_*} (b)).  
% $\r_{\S}^k$ is the distance from $\x_0$ to the ball 
We have $\| \r^k \|_2 \kappa < 1$; using the lower bound of Lemma \ref{lem:geometric}, we obtain
% Similarly to the upper bound, the assumption of Lemma \ref{lem:geometric} holds with probability larger than $1-\delta$ \red{I don't think this is true and actually needed}
\begin{equation}
-C_1 \kappa \|\r^k\|_2 \tan^2(\tilde{\theta}) \leq \frac{\|\r_\S^k\|_2}{\|\x_\mathcal{T}-\x_0\|_2}-1
\end{equation}
for any $\S$. Observe moreover that
\begin{align*}
\tan^2(\tilde{\theta}) \leq \frac{1}{\cos(\tilde{\theta})^2} = \frac{\| \x_{\T} - \x_0 \|_2^2}{\| \r^k \|_2^2}.
\end{align*}
Hence, the following bound holds:
\begin{align*}
\frac{\| \x_\T - \x_0 \|_2^2}{\| \r^k \|_2^2} \left( 1 - C_1 \kappa \| \r^k \|_2 \frac{\| \x_{\T} - \x_0 \|_2^2}{\| \r^k \|_2^2} \right)^2 \leq \frac{\| \r_\S^k \|_2^2}{\| \r^k \|_2^2}.
\end{align*}
Let $\r_\S^\mathcal{T}$ denote the worst-case perturbation belonging to subspace $\S$ for the \textit{linear} classifier $\mathcal{T}_{x^*} \B_k$. It is not hard to see that $\r_\S^\mathcal{T}$ is \textit{collinear} to $\r_{\S}^k$ (see Lemma \ref{lem:calotte3} for a proof). Hence, we have $\r_{\S}^\T = \x_{\T} - \x_0$. By applying our result on linear classifiers in Theorem \ref{th:linear} for the tangent classifier $\mathcal{T}_{x^*} \B_k$, we have:
\begin{align*}
\Pbb\left( \frac{\gaone}{\alpha^2} \leq \frac{\| \r_{\S}^\T \|_2^2 }{\| \r^k \|_2^2} \leq \frac{\gatwo}{\alpha^2} \right) \geq 1-2 \delta.
\end{align*}
We therefore conclude that
\begin{align*}
\Pbb\left( \frac{\gaone}{\alpha^2} \left\{ 1 - C_1 \kappa \| \r^k \|_2 \frac{\gatwo}{\alpha^2} \right\}^2 \leq \frac{\| \r_{\S}^{k} \|_2^2}{\| \r^k \|_2^2} \right) \geq 1-2 \delta,
\end{align*}
which concludes the proof of the lower bound.
% It is not hard moreover to see that the subspace adversarial perturbation in $\mathcal{S}$ that 

\begin{figure}
\center
\subfigure[]{
\includegraphics[scale=1]{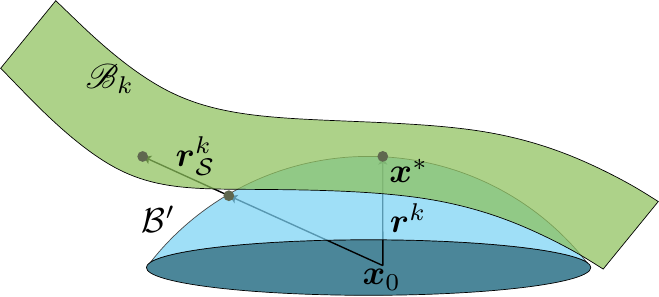}
}
\subfigure[]{
\includegraphics[scale=1]{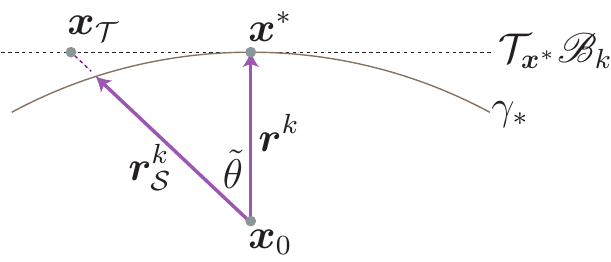}
}
\caption{Left: To prove the lower bound, we consider a ball $\mathcal{B}'$ included in $\mathcal{R}_{\lab(\x_0)}$ that intersects with the boundary at $\x^*$. Lower bounds on $\| \r_{\S}^k \|_2$ derived when the boundary is the sphere $\partial \mathcal{B}'$ are also valid lower bounds for the real boundary $\bo_k$. Right: Cross section of the problem along the plane $\mathcal{U}' = \text{span} \left( \r_{\S}^k, \r^k \right)$. $\gamma$ denotes the normal section of $\bo_k = \mathcal{B}'$ along the plane $\mathcal{U}'$.}
\label{fig:subadv_2d_*}
\end{figure}

\end{proof}

% MULTICLASS
The goal is now to extend the previous result, derived for binary classifiers, to the multiclass classification case. To do so, we show the following lemma.

\begin{lemma}[Binary case to multiclass]
Let $p = \arg\min_{i} \| \r^i \|_2$.
Define the deterministic set
\begin{align}
A = \left\{ k: \| \r^k \|_2 \geq 1.45 \sqrt{\gatwo} \sqrt{\frac{d}{m}} \| \r^* \|_2 \right\}.
\end{align}
Assume that, for all $k \in A^c$, we have
\begin{align}
\Pbb\left( l \leq \frac{\| \r_\S^k \|_2}{\| \r^k \|_2} \leq u \right) \geq 1 - \delta.
\end{align}
and that
\begin{align}
\Pbb\left( \| \r_{\S}^p \|_2 \geq 1.45 \sqrt{\gatwo} \sqrt{\frac{d}{m}} \| \r^* \|_2 \right) \leq t.
\end{align}
Then, we have
\begin{align}
\Pbb\left( l \leq \frac{\| \r_\S^* \|_2}{\| \r^* \|_2} \leq u \right) \geq 1 - (\numClass + 1) \delta - t.
\end{align}
\label{lem:single_class_multiclass_nonLin}
\end{lemma}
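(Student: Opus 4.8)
The plan is to prove this multiclass reduction lemma by the same union-bound strategy used in the affine case (Lemma~\ref{lem:single_class_multiclass_linear}), but carefully accounting for the set $A$ of ``far'' classes whose curvature hypothesis may fail. The key observation is that the global quantities $\r^*$ and $\r_\S^*$ are obtained by taking minima over classes: $\|\r^*\|_2 = \min_k \|\r^k\|_2 = \|\r^p\|_2$ and $\|\r_\S^*\|_2 = \min_k \|\r_\S^k\|_2$. I would split the two ``bad events'' --- namely $\{\|\r_\S^*\|_2 / \|\r^*\|_2 \geq u\}$ and $\{\|\r_\S^*\|_2 / \|\r^*\|_2 \leq l\}$ --- and bound each separately, then combine via
\begin{align*}
\Pbb\left( l \leq \frac{\|\r_\S^*\|_2}{\|\r^*\|_2} \leq u \right) = 1 - \Pbb\left( \frac{\|\r_\S^*\|_2}{\|\r^*\|_2} \leq l \right) - \Pbb\left( \frac{\|\r_\S^*\|_2}{\|\r^*\|_2} \geq u \right).
\end{align*}

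For the upper-bound bad event, since $\|\r^*\|_2 = \|\r^p\|_2$ and $\|\r_\S^*\|_2 \leq \|\r_\S^p\|_2$, I would use that $p$ achieves the minimal worst-case norm, so $p \notin A$ (because $\|\r^p\|_2 = \|\r^*\|_2 < 1.45\sqrt{\gatwo}\sqrt{d/m}\|\r^*\|_2$). Thus the per-class hypothesis applies to $p$, giving $\Pbb(\|\r_\S^*\|_2/\|\r^*\|_2 \geq u) \leq \Pbb(\|\r_\S^p\|_2/\|\r^p\|_2 \geq u) \leq \delta$. This step mirrors the affine proof exactly.

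The lower-bound bad event is where the set $A$ genuinely matters and is the main obstacle. The event $\{\|\r_\S^*\|_2 \leq l\|\r^*\|_2\}$ means some class $k$ has small subspace robustness $\|\r_\S^k\|_2 \leq l\|\r^*\|_2$. For classes $k \in A^c$ I can invoke the per-class lower bound $\Pbb(\|\r_\S^k\|_2/\|\r^k\|_2 \leq l) \leq \delta$, since $\|\r^k\|_2 \geq \|\r^*\|_2$ makes $\|\r_\S^k\|_2 \leq l\|\r^*\|_2 \leq l\|\r^k\|_2$. The difficulty is the classes $k \in A$, for which the curvature assumption and hence the per-class bound are unavailable. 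Here I would argue that such classes are ``too far'' to cause the violation: intuitively, if $k \in A$ then $\|\r^k\|_2$ is large, and I must show $\|\r_\S^k\|_2$ cannot dip below $l\|\r^*\|_2$. This is precisely why the extra hypothesis $\Pbb(\|\r_\S^p\|_2 \geq 1.45\sqrt{\gatwo}\sqrt{d/m}\|\r^*\|_2) \leq t$ is introduced --- on the complementary event (probability $\geq 1-t$), $\|\r_\S^p\|_2 < 1.45\sqrt{\gatwo}\sqrt{d/m}\|\r^*\|_2$, so $\|\r_\S^*\|_2 \leq \|\r_\S^p\|_2$ is controlled and the minimizing class of $\|\r_\S^k\|_2$ cannot be forced to come from $A$ in a way that breaks the lower bound. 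I would then apply the union bound over the at-most-$L$ classes in $A^c$, contributing $L\delta$, and add the $t$ term from the auxiliary event, yielding $\Pbb(\|\r_\S^*\|_2/\|\r^*\|_2 \leq l) \leq L\delta + t$.

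Combining the two bad-event bounds gives total failure probability at most $\delta + L\delta + t = (L+1)\delta + t$, which matches the claimed conclusion. The cleanest way to organize the lower-bound argument is to condition on the good event of the auxiliary hypothesis and show deterministically that, on that event, any class achieving $\|\r_\S^k\|_2 \leq l\|\r^*\|_2$ must lie in $A^c$, reducing the problem to the union bound over $A^c$. I expect the bookkeeping of exactly which classes can be the argmin of $\|\r_\S^k\|_2$, and verifying the threshold constants line up so that $A$-classes are excluded on the good event, to be the only genuinely delicate part; the rest is a direct transcription of the affine multiclass lemma.
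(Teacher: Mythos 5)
Your proposal is correct and follows essentially the same route as the paper's proof: it bounds the upper bad event via the class $p$ (correctly noting $p \in A^c$), and splits the lower bad event according to whether the argmin of $\|\r_\S^k\|_2$ lies in $A^c$ (union bound over at most $L$ classes, giving $L\delta$) or in $A$ (reduced to the auxiliary hypothesis, giving $t$). The only step you leave implicit --- that for $k \in A$ one has the deterministic inequality $\|\r_\S^k\|_2 \geq \|\r^k\|_2 \geq 1.45\sqrt{\gatwo}\sqrt{d/m}\,\|\r^*\|_2$, so that the argmin falling in $A$ forces $\|\r_\S^p\|_2 \geq \|\r_\S^*\|_2 \geq 1.45\sqrt{\gatwo}\sqrt{d/m}\,\|\r^*\|_2$ --- is exactly how the paper closes the argument.
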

\begin{proof}
Note first that
\begin{align}
\Pbb\left( \frac{\| \r_{\S}^* \|_2 }{ \| \r^* \|_2 } \geq u \right) & \leq \Pbb\left( \left\{ \frac{\| \r_{\S}^p \|_2 }{ \| \r^p \|_2 } \geq u \right\} \right) \leq \delta.
\end{align}
We now focus on bounding the other bad event probability $\Pbb \left( \frac{\| \r_{\S}^* \|_2 }{ \| \r^* \|_2 } \leq l \right)$. We have
\begin{align}
\Pbb \left( \frac{\| \r_{\S}^* \|_2 }{ \| \r^* \|_2 } \leq l \right) = \Pbb \left( \min_{k \notin A} \| \r_{\S}^k \|_2 = \| \r_{\S}^* \|_2, \frac{\| \r_{\S}^* \|_2 }{ \| \r^* \|_2 } \leq l \right) + \Pbb \left( \min_{k \in A} \| \r_{\S}^k \|_2 = \| \r_{\S}^* \|_2,  \frac{\| \r_{\S}^* \|_2 }{ \| \r^* \|_2 } \leq l \right)
\end{align}
The first probability can be bounded as follows:
\begin{align}
\Pbb \left( \min_{k \notin A} \| \r_{\S}^k \|_2 = \| \r_{\S}^* \|_2, \frac{\| \r_{\S}^* \|_2 }{ \| \r^* \|_2 } \leq l \right) \leq \Pbb \left( \bigcup_{k \notin A} \frac{\| \r_{\S}^* \|_2}{\| \r^* \|_2} \leq l \right) \leq \numClass \delta.
\end{align}
The second probability can also be bounded in the following way
\begin{align}
\Pbb \left( \min_{k \in A} \| \r_{\S}^k \|_2 = \| \r_{\S}^* \|_2,  \frac{\| \r_{\S}^* \|_2 }{ \| \r^* \|_2 } \leq l \right) \leq \Pbb \left( \min_{k \in A} \| \r_{\S}^k \|_2 = \| \r_{\S}^* \|_2 \right) = \Pbb \left( \exists k \in A, \| \r_{\S}^k \|_2 \leq \| \r_{\S}^* \|_2 \right).
\end{align}
Observe that, for $k \in A$, we have $\| \r_{\S}^k \|_2 \geq \| \r^k \|_2 \geq 1.45 \sqrt{\gatwo} \sqrt{\frac{d}{m}} \| \r^* \|_2$. Hence, we conclude that
\begin{align}
\Pbb \left( \min_{k \in A} \| \r_{\S}^k \|_2 = \| \r_{\S}^* \|_2,  \frac{\| \r_{\S}^* \|_2 }{ \| \r^* \|_2 } \leq l \right) & \leq \Pbb \left( 1.45 \sqrt{\gatwo} \sqrt{\frac{d}{m}} \| \r^* \|_2 \leq \| \r_{\S}^* \|_2 \right)  \\
& \leq \Pbb \left( 1.45 \sqrt{\gatwo} \sqrt{\frac{d}{m}} \| \r^* \|_2 \leq \| \r_{\S}^p \|_2\right)  \leq t.
\end{align}
\end{proof}

\begin{repcorollary}{corr:nonlinear}
Let $\S$ be a random $m$-dimensional subspace of $\mathbb{R}^d$. Assume that, for all $k \notin A$, we have
\begin{align}
\kappa(\bo_k) \|\r^k\|_2 \leq \frac{0.2}{\gatwo} \frac{m}{d}
\end{align}
Then, we have
\begin{equation}
% \min\left\{0.65 \sqrt{\gaone} \sqrt{\frac{d}{m}} , 0.5 \sqrt{\frac{d}{m}} (1-\frac{0.6 m}{d \gatwo}) \right\} \leq\frac{\|\r_\S^* \|_2}{\|\r^*\|_2} \leq 1.45 \sqrt{\gatwo} \sqrt{\frac{d}{m}}
0.875 \sqrt{\gaone} \sqrt{\frac{d}{m}} \leq\frac{\|\r_\S^* \|_2}{\|\r^*\|_2} \leq 1.45 \sqrt{\gatwo} \sqrt{\frac{d}{m}}
\end{equation}
with probability larger than $1-4 (L+2) \delta$.
\end{repcorollary}
\begin{proof}
Using Theorem \ref{thm:mainThm_constantCurvature_binary}, we have that for all $k \notin A$, the result in Eq. (\ref{eq:mainThm_eq}) holds. We simplify the result with the assumption $\kappa(\bo_k) \| \r \|_2 \leq \frac{0.2}{\gatwo} \frac{m}{d}$. Hence, the bounds of Theorem \ref{thm:mainThm_constantCurvature_binary} are given as follows
\begin{align}
\frac{\gaone}{\alpha^2} \left(1 - 0.2 C_1 \right)^2 \leq \frac{\| \r_{\S}^k \|_2^2}{\| \r^k \|_2^2 } \leq \frac{\gatwo}{\alpha^2} \left(1 + 0.2 C_2 \right)^2, % = \gatwo \frac{d}{m} 1.45^2
\end{align}
which leads to the following bounds:
\begin{align}
\gaone \frac{d}{m} 0.875^2 \leq \frac{\| \r_{\S}^k \|_2^2}{\| \r^k \|_2^2 } \leq \gatwo \frac{d}{m} 1.45^2,
\end{align}
with probability exceeding $1-4\delta$.
%For the lower bound, observe that the first term can be lower bounded as follows:
%\begin{align*}
%\frac{\gaone}{\alpha^2} \left( 1 - \frac{C_1}{2} \right)^2 \geq 0.47 \gaone \frac{d}{m}.
%\end{align*}
%The second term can also be bounded using the constraint on $\kappa$:
%\begin{align*}
%C_3 \frac{\left( 1 - 3 \| \r^k \|_2 \kappa )^2\right)}{\kappa \| \r^k \|^2} \geq C_3 \gatwo \frac{\left( 1 - 3 \| \r^k \|_2 \kappa )\right)^2}{C \alpha^2} \geq C_3 \gatwo \frac{\left( 1 - 3 \frac{C \alpha^2}{\gatwo}\right)^2}{C \alpha^2}.
%\end{align*}
%Using the fact that $\alpha^2 / \gatwo \leq 1$, we conclude that
%\begin{align*}
%C_3 \frac{\left( 1 - 3 \| \r^k \|_2 \kappa )^2\right)}{\kappa \| \r^k \|^2} \geq \frac{C_3 \gatwo}{C \alpha^2} \left( 1 - 3 C \right)^2 = 0.9 \gatwo \frac{d}{m}.
%\end{align*}
%For the lower bound, we use once again the bound on the curvature, and show that each term in the lower bound is larger than $0.47 \gaone \frac{d}{m}$. Hence, taking the square root, we obtain for all $k \notin A$,
%% We simplify the result by taking $\epsilon = 1/4$ and $\kappa(\bo_k) \| \r_k \|_2 = \frac{1}{15} \frac{m}{d}$.
%% With these values, we obtain, for all $k \notin A$,
%\begin{align}
%0.65 \sqrt{\frac{d}{m}} \leq \frac{\|\r_\S^k \|_2}{\|\r^k\|_2}
%\leq 1.45 \sqrt{\frac{d}{m}}.
%\label{eq:simplif}
%\end{align}

By using Lemma \ref{lem:single_class_multiclass_nonLin}, together with the fact that $t = \delta$, we obtain
\begin{align}
\Pbb \left( 0.875\sqrt{\gaone}  \sqrt{\frac{d}{m}} \leq \frac{\| \r_{\S}^* \|_2}{\| \r^* \|_2} \leq 1.45 \sqrt{\gatwo} \sqrt{\frac{d}{m}} \right) \geq 1 - 4(L+2) \delta,
\end{align}
which concludes the proof.
\end{proof}

\subsection{Useful results}
\label{subsec:useful}
\begin{figure}[ht]
\centering
\includegraphics[scale=1]{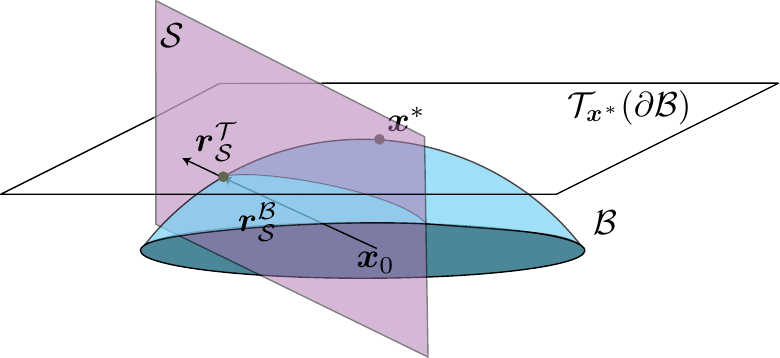}
\caption{\label{fig:calotte3} The worst-case perturbation in the subspace $\mathcal{S}$ when the decision boundary is $\partial \mathcal{B}$ and $T_{\x^*} (\partial \mathcal{B})$ (denoted respectively by $\boldsymbol{r}_{\mathcal{S}}^{\mathcal{B}}$ and $\boldsymbol{r}_{\mathcal{S}}^{\mathcal{T}}$) are collinear.}
\end{figure}
\begin{lemma}
\label{lem:calotte3}
Let $\x_0 \in \R^d$, and $\x^*$ denote the closest point to $\x_0$ on the sphere $\partial \mathcal{B}$ (see Fig. \ref{fig:calotte3}). Let $\T_{\x^*} (\partial \mathcal{B})$ be the tangent space to $\partial \mathcal{B}$ at $\x^*$. For an arbitrary subspace $\S$, let $\r_{\S}^\T$ and $\r_{\S}^\mathcal{B}$ denote the worst-case perturbations of $\x_0$ on the subspace $\S$, when the decision boundaries are respectively $\T_{\x^*} (\partial \mathcal{B})$ and $\partial \mathcal{B}$. Then, the two perturbations $\r_{\S}^\T$ and $\r_{\S}^{\mathcal{B}}$ are collinear.
\end{lemma}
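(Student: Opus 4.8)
The plan is to show that both perturbations point along the single direction $\P_\S\boldsymbol{n}$, where $\boldsymbol{n}$ denotes the unit normal to $\partial\mathcal{B}$ at $\x^*$; once this is established, collinearity is immediate. First I would fix the geometry using the defining property of $\x^*$: since $\x^*$ is the closest point of the sphere $\partial\mathcal{B}$ (center $\z^*$, radius $R$) to $\x_0$, the segment $\x_0\x^*$ is orthogonal to the tangent space and hence the three points $\z^*$, $\x^*$, $\x_0$ are collinear along the normal ray. Consequently $\x_0-\z^*$ is a scalar multiple of $\boldsymbol{n}$, a fact I will invoke repeatedly.

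For the tangent perturbation $\r_\S^\T$, the boundary $\T_{\x^*}(\partial\mathcal{B})$ is the hyperplane $\{\x : \langle \x-\x^*, \boldsymbol{n}\rangle = 0\}$, so crossing it reduces to a single linear constraint $\langle \r, \boldsymbol{n}\rangle = c$ on $\r\in\S$, where $c = -\langle \x_0-\x^*, \boldsymbol{n}\rangle$. Because $\r\in\S$ gives $\langle \r, \boldsymbol{n}\rangle = \langle \r, \P_\S\boldsymbol{n}\rangle$, the minimal-norm solution of a single linear equation in $\S$ is the scalar multiple $\r_\S^\T = c\,\P_\S\boldsymbol{n}/\|\P_\S\boldsymbol{n}\|_2^2$, so $\r_\S^\T$ is parallel to $\P_\S\boldsymbol{n}$.

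For the spherical perturbation $\r_\S^\mathcal{B}$, I would reduce the minimization to the affine subspace $\x_0+\S$: the minimal-norm $\r\in\S$ sending $\x_0$ onto $\partial\mathcal{B}$ is the displacement from $\x_0$ to the extremal point of $(\x_0+\S)\cap\partial\mathcal{B}$. Intersecting the ball with $\x_0+\S$ yields a lower-dimensional sphere centered at the orthogonal projection of $\z^*$, namely at $\x_0+\P_\S(\z^*-\x_0)$, via the affine projection identity $\P_{\x_0+\S}(\z^*)-\x_0 = \P_\S(\z^*-\x_0)$. The extremal point of this lower-dimensional sphere relative to $\x_0$ lies on the radial line through $\x_0$ and that projected center, so $\r_\S^\mathcal{B}$ is parallel to $\P_\S(\z^*-\x_0)$. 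Since $\z^*-\x_0$ is a scalar multiple of $\boldsymbol{n}$, its projection $\P_\S(\z^*-\x_0)$ is a scalar multiple of $\P_\S\boldsymbol{n}$, and I conclude that $\r_\S^\mathcal{B}$ is parallel to $\P_\S\boldsymbol{n}$, hence collinear with $\r_\S^\T$.

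The tangent computation is routine; the main obstacle is the spherical case, specifically justifying that the extremal point of $(\x_0+\S)\cap\partial\mathcal{B}$ relative to $\x_0$ is radial within the affine subspace (the standard fact that the nearest or farthest point of a sphere from a given point lies on the line through its center) and correctly applying the affine projection identity. One should also record that the argument covers both configurations in which the lemma is used — $\x_0$ outside the ball (crossing means entering $\mathcal{B}$) and $\x_0$ inside the ball (crossing means exiting $\mathcal{B}$) — since in either case the extremal point is radial, leaving the collinearity conclusion unchanged.
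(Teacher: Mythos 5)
Your proof is correct, and it reaches the same intermediate fact as the paper --- namely that $\r_{\S}^{\mathcal{B}}$ is collinear with the projection onto $\S$ of the radial direction $\x_0-\z^*$ (equivalently, of the normal $\boldsymbol{n}$, since $\z^*$, $\x^*$, $\x_0$ are collinear) --- but by a genuinely different route for the spherical case. The paper writes $\r_{\S}^{\mathcal{B}}$ as the solution of the constrained problem $\min_{\r}\|\r\|_2^2$ subject to $\|\x_0+\P_\S\r\|_2^2=R^2$, observes that the optimum satisfies $\P_\S\r=\r$, and extracts collinearity with $\P_\S\x_0$ (center at the origin) from the stationarity condition of the Lagrangian, $(1+\lambda)\r+\lambda\P_\S\x_0=0$. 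You instead slice: the feasible points form the lower-dimensional sphere $(\x_0+\S)\cap\partial\mathcal{B}$ centered at $\x_0+\P_\S(\z^*-\x_0)$, and the extremal point of a sphere relative to a point in the same affine subspace is radial. Your argument is more elementary (no calculus, no multipliers) and makes the geometry transparent; it also handles the farthest-point configuration with no extra work, whereas the first-order condition of the paper is only necessary and does not by itself distinguish the nearest from the farthest intersection point --- a gap the paper leaves implicit and you close explicitly by discussing both configurations. The paper's Lagrangian computation, on the other hand, is shorter to write down and avoids invoking the sphere-slicing fact. Both arguments silently exclude the degenerate cases $\P_\S\boldsymbol{n}=0$ (the slice misses the tangent hyperplane) and $\x_0$ coinciding with the projected center (non-unique extremal point); these occur with probability zero for a random subspace and do not affect the result.
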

\begin{proof}
Assuming the center of the ball $\mathcal{B}$ is the origin, the points on the sphere $\partial \mathcal{B}$ satisfy equation: $\| \x \|_2 = R$, where $R$ denotes the radius. Hence, the perturbation $\r_{\S}^{\mathcal{B}}$ is given by
\begin{align}
\label{eq:equation_sphere}
\r_{\S}^{\mathcal{B}} = \argmin_{\r \in \mathbb{R}^d} \| \r \|_2^2 \text{ such that } \| \x_0 + \P_\S \r \|_2^2 = R^2.
\end{align}
By equating the gradient of Lagrangian of the above constrained optimization problem to zero, we obtain the following necessary optimality condition
\begin{align*}
\r + \lambda \P_\S (\x_0 + \P_{\S} \r) = 0.
\end{align*}
It should further be noted that $\P_{\S} \r_{\S}^\mathcal{B} = \r_{\S}^\mathcal{B}$. Indeed, if $\r_{\S}^\mathcal{B}$ had a component orthogonal to $\S$, the projection of $\r_{\S}^\mathcal{B}$ onto $\S$ would have strictly lower $\ell_2$ norm, while still satisfying the condition in Eq.(\ref{eq:equation_sphere}). Hence, the necessary condition of optimality becomes
\begin{align*}
(1+\lambda) \r + \lambda \P_{\S} \x_0 = 0,
\end{align*}
from which we conclude that $\r_{\S}^\mathcal{B}$ is collinear to $\P_{\S} \x_0$. 

It should further be noted that $\r_{\S}^\T$ can be computed in closed form, and is collinear to $\P_{\S} (\x^* - \x_0)$, which is itself collinear to $\x_0$, as the the center of the ball was assumed to be the origin. This concludes the proof.
\end{proof}

\begin{lemma}
If $x\in[0,2(\sqrt{2}-1)]$,
\begin{equation}
\sqrt{1-x}\geq 1-\frac{x}{2}-\frac{x^2}{4}.
\end{equation}
\label{lem:ineq}
\end{lemma}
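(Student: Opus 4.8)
The plan is to reduce the claimed inequality to a polynomial inequality by squaring, which is legitimate once I verify that both sides are nonnegative on the interval $[0, 2(\sqrt{2}-1)]$. First I would note that since $2(\sqrt{2}-1) \approx 0.83 < 1$, the quantity $1-x$ is positive throughout the interval, so $\sqrt{1-x}$ is well defined and the left-hand side is nonnegative. For the right-hand side, the quadratic $1 - \frac{x}{2} - \frac{x^2}{4}$ vanishes at $x = \sqrt{5}-1 \approx 1.24$ (its only positive root), which lies strictly to the right of the interval; hence $1 - \frac{x}{2} - \frac{x^2}{4} \geq 0$ for all $x \in [0, 2(\sqrt{2}-1)]$. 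Both sides being nonnegative, the desired inequality is equivalent to the squared inequality $1 - x \geq \left(1 - \frac{x}{2} - \frac{x^2}{4}\right)^2$.

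Next I would expand the square. A direct computation gives
\[
\left(1 - \frac{x}{2} - \frac{x^2}{4}\right)^2 = 1 - x - \frac{x^2}{4} + \frac{x^3}{4} + \frac{x^4}{16}.
\]
Subtracting $1-x$ from both sides of the squared inequality, the claim collapses to $0 \geq -\frac{x^2}{4} + \frac{x^3}{4} + \frac{x^4}{16}$, which after multiplying by $16$ and factoring becomes
\[
x^2 \left( x^2 + 4x - 4 \right) \leq 0.
\]

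Since $x^2 \geq 0$, it then suffices to check that $x^2 + 4x - 4 \leq 0$ on the interval. This upward-opening parabola has roots $-2 \pm 2\sqrt{2}$, so it is nonpositive exactly on $[-2-2\sqrt{2},\, -2 + 2\sqrt{2}]$. The key observation is that the positive root $-2 + 2\sqrt{2} = 2(\sqrt{2}-1)$ coincides precisely with the right endpoint of our interval, so $x^2 + 4x - 4 \leq 0$ holds for every $x \in [0, 2(\sqrt{2}-1)]$, which completes the argument.

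There is no genuine obstacle here: the work is elementary algebra, and the only point requiring care is the sign check that justifies squaring. The pleasant feature worth stressing is that the interval $[0, 2(\sqrt{2}-1)]$ is not arbitrary but is \emph{exactly} the set on which the reduced quadratic $x^2 + 4x - 4$ is nonpositive, which is why the bound is tight, with equality holding at both endpoints $x=0$ and $x = 2(\sqrt{2}-1)$.
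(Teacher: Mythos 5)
Your proof is correct and complete: the sign checks justifying the squaring are in order, the expansion of $\left(1-\frac{x}{2}-\frac{x^2}{4}\right)^2$ is accurate, and the reduction to $x^2(x^2+4x-4)\leq 0$ on $[0,2(\sqrt{2}-1)]$ is exactly right, with the endpoint $2(\sqrt{2}-1)$ being the positive root of $x^2+4x-4$. The paper states this lemma without proof, so there is no argument to compare against; your elementary verification is the natural one, and your remark that the stated interval is precisely the maximal one on which the bound holds is a worthwhile observation.
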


\begin{lemma}
If $x \geq 0$,
\begin{equation}
\sqrt{1+x}\geq 1+\frac{x}{2}-\frac{x^2}{8}.
\end{equation}
\label{lem:ineq2}
\end{lemma}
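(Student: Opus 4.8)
The plan is to establish the scalar inequality $\sqrt{1+x} \geq 1 + \frac{x}{2} - \frac{x^2}{8}$ on $[0,\infty)$ by a one-variable convexity argument, reducing the claim to a monotonicity statement rather than wrestling with the square root directly. I would introduce the auxiliary function $g(x) = \sqrt{1+x} - 1 - \frac{x}{2} + \frac{x^2}{8}$ on $[0,\infty)$, so that the target inequality is precisely $g(x) \geq 0$.

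First I would record the boundary value $g(0) = 0$. Next I would compute $g'(x) = \frac{1}{2\sqrt{1+x}} - \frac{1}{2} + \frac{x}{4}$ and note that $g'(0) = 0$ as well. The key step is the second derivative, $g''(x) = \frac{1}{4}\bigl(1 - (1+x)^{-3/2}\bigr)$, which is nonnegative for every $x \geq 0$ because $(1+x)^{-3/2} \leq 1$ there. Consequently $g'$ is nondecreasing on $[0,\infty)$, and since $g'(0) = 0$ this gives $g'(x) \geq 0$; hence $g$ is itself nondecreasing, and $g(0) = 0$ forces $g(x) \geq 0$ for all $x \geq 0$, which is the claim. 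There is essentially no obstacle: the only thing to verify is that the elementary bound $(1+x)^{-3/2} \leq 1$ holds throughout the ray, which secures $g'' \geq 0$ globally rather than merely locally.

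As an alternative route that avoids calculus entirely, I would split on the sign of the right-hand side. When $1 + \frac{x}{2} - \frac{x^2}{8} \leq 0$ the inequality is immediate, since the left-hand side is at least $1$. When the right-hand side is positive, both sides are nonnegative and I would square; the difference $(1+x) - \bigl(1 + \frac{x}{2} - \frac{x^2}{8}\bigr)^2$ simplifies to $\frac{x^3}{64}(8 - x)$, which is nonnegative on the relevant interval because the right-hand side already turns negative once $x \geq 2 + 2\sqrt{3} < 8$. Either argument delivers the result, but I would favor the convexity version for its brevity and because it sidesteps the bookkeeping of the sign split. This lemma, together with its companion Lemma \ref{lem:ineq}, is exactly what feeds the Taylor-type estimates in the proof of the lower bound of Theorem \ref{thm:mainThm_constantCurvature_binary}.
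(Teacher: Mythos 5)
Your convexity argument is correct and complete: $g(x)=\sqrt{1+x}-1-\tfrac{x}{2}+\tfrac{x^2}{8}$ has $g(0)=g'(0)=0$ and $g''(x)=\tfrac{1}{4}\bigl(1-(1+x)^{-3/2}\bigr)\geq 0$ on $[0,\infty)$, which forces $g\geq 0$. The paper states this lemma without proof in its ``Useful results'' subsection, so there is no argument to compare against; your second, calculus-free route (splitting on the sign of the right-hand side and checking that $(1+x)-\bigl(1+\tfrac{x}{2}-\tfrac{x^2}{8}\bigr)^2=\tfrac{x^3}{64}(8-x)\geq 0$ whenever the right-hand side is positive, since it turns negative already at $x=2+2\sqrt{3}<8$) is also verified and correct. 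Either proof fills the gap the authors left; the squaring argument has the minor advantage of being fully elementary, while the convexity version generalizes immediately to the companion Lemma \ref{lem:ineq}.
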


\end{document}